%% file: main.tex
\input{sections/00_preamble}

\begin{document}

\title{ManifoldKV: Training-Free KV Cache Compression via Euclidean Outlier Detection}

\author{
  Debajyoti Datta\thanks{Corresponding author: \texttt{debajyoti@hippocraticai.com}} \\
  Hippocratic AI
  \and
  Trishala Neeraj \\
  tn338@cornell.edu
  \and
  Bibek Paudel \\
  Hippocratic AI
  \and
  Vyom Sharma \\
  Hippocratic AI
  \and
  Subhabrata Mukherjee \\
  Hippocratic AI
}

\date{}

\maketitle

\input{sections/01_abstract}

\input{sections/02_introduction}

\input{sections/03_background}

\input{sections/04_method}

\input{sections/05_theory}

\input{sections/06_experiments}

\input{sections/07_discussion}

\input{sections/08_conclusion}

\bibliographystyle{plainnat}
\bibliography{example_paper}

\input{sections/09_appendix}

\end{document}

%% file: sections/00_preamble.tex
\documentclass[11pt]{article}

\usepackage[margin=1in]{geometry}

\usepackage{microtype}
\usepackage{graphicx}
\usepackage{subcaption}
\usepackage{booktabs}
\usepackage{hyperref}
\usepackage{url}

\usepackage{amsmath}
\usepackage{amssymb}
\usepackage{mathtools}
\usepackage{amsthm}
\usepackage{algorithm}
\usepackage{algorithmic}
\newcommand{\RETURN}{\STATE \textbf{return} }
\usepackage{xcolor}
\usepackage{multirow}
\usepackage{colortbl}
\usepackage{enumitem}  
\usepackage{natbib}

\usepackage[capitalize,noabbrev]{cleveref}

\usepackage{pifont}

\input{sections/00_figures_preamble}

\theoremstyle{plain}
\newtheorem{theorem}{Theorem}[section]
\newtheorem{proposition}[theorem]{Proposition}

\theoremstyle{definition}
\newtheorem{definition}[theorem]{Definition}
\newtheorem{assumption}[theorem]{Assumption}
\theoremstyle{remark}

\newcommand{\method}{\textsc{ManifoldKV}}
\newcommand{\windowmethod}{\textsc{WindowedManifoldKV}}
\newcommand{\R}{\mathbb{R}}
\newcommand{\E}{\mathbb{E}}
\newcommand{\bk}{\mathbf{k}}
\newcommand{\bv}{\mathbf{v}}
\newcommand{\bmu}{\boldsymbol{\mu}}
\newcommand{\cmark}{\textcolor{green!60!black}{\checkmark}}
\newcommand{\xmark}{\textcolor{red}{\ding{55}}}

%% file: sections/00_figures_preamble.tex
\usepackage{tikz}
\usepackage{pgfplots}
\pgfplotsset{compat=1.18}
\usepgfplotslibrary{groupplots}
\usetikzlibrary{arrows.meta,calc,shapes.geometric,positioning,fit,backgrounds}

\usepackage{float}

%% file: sections/01_abstract.tex
\begin{abstract}
Long-context inference is constrained by KV-cache memory, which grows linearly with sequence length; KV-cache compression therefore hinges on reliably selecting which past tokens to retain.
Most geometry-based eviction methods score keys by cosine similarity to a global centroid, but cosine is scale-invariant and can discard magnitude cues that distinguish semantically salient tokens.
We propose \method{}, a training-free scorer that ranks tokens by Euclidean distance to the key centroid, capturing both angular and radial deviations.

On the RULER benchmark, \method{} achieves \textbf{95.7\%} accuracy at 4K--16K contexts with 20\% compression; matching the best geometric baseline while improving robustness in two regimes where cosine scoring fails.
First, on multi-key retrieval, \method{} reduces directional collisions, achieving \textbf{92.4\%} vs KeyDiff's 77.0\% (+15.4 points) on 3-key NIAH at 50\% compression.
Second, to address dilution and performance collapse of global centroids at 64K context, we introduce \windowmethod{}, which restores accuracy to 84.3\% at 25\% compression, a 49-point recovery over global L2 and +3.2 points over KeyDiff. The method requires only 3 lines of code and works across 4 architectures without tuning.
\end{abstract}

%% file: sections/02_introduction.tex
\section{Introduction}
\label{sec:intro}

The Key-Value (KV) cache is an essential component in transformer based models, as it eliminates redundant computations by storing previously computed key and value vectors \citep{kwon2023efficient, li2024kvcachesurvey}.
While significantly speeding up text generation, it is also 
a critical memory bottleneck in long-context LLM inference. 
For a 70B model processing 100K tokens, the KV-cache alone requires $>$60 GB memory~\citep{kwon2023efficient}. 
Compression methods that evict less important tokens are essential for practical deployment, but every eviction strategy trades off between cache size and text generation accuracy: a small cache with a lot of tokens evicted is likely very inaccurate. 
Therefore, while designing an effective KV-cache compression method, one needs to address the question: \emph{which tokens to keep?}

\textbf{The Geometric Outlier Hypothesis.} Prior work on KV cache compression fall into two categories: \emph{attention-based methods} like SnapKV~\citep{li2024snapkv} and H2O~\citep{zhang2023h2o} that retain tokens receiving high cumulative attention scores, and \emph{geometric methods} like KeyDiff~\citep{park2025keydiff} that identify important tokens based on key vector geometry. Recent geometric approaches~\citep{knorm2024, park2025keydiff, feng2025identify} have converged on a compelling idea: tokens that are \emph{geometrically different} from the average context encode unique semantic content and should be retained. KeyDiff operationalizes this by measuring cosine similarity between each key vector $\bk_i$ and the mean $\bmu = \frac{1}{N}\sum_j \bk_j$, evicting tokens with high similarity (i.e., ``typical'' directions). 
This intuition is sound---critical entities like names, numbers, and technical terms should embed differently from common stopwords, hence they should not be evicted.

However, cosine similarity measures only \emph{angular} deviation, normalizing away vector magnitude entirely. Consider two key failure modes illustrated in Figure~\ref{fig:intuition}.
\textbf{(i) Radial outliers}: A token $\bk_i = \alpha \bmu$ with $\alpha \gg 1$ lies far from the centroid in Euclidean space, yet has \emph{maximum} cosine similarity (score = 1). Cosine-based methods incorrectly evict such tokens.
\textbf{(ii) Magnitude-encoded semantics}: Empirically, we find that semantically important tokens (entities, numbers) exhibit both unusual directions \emph{and} unusual magnitudes. The removal of magnitude discards important signal from the context.

For example, consider two tokens with keys $\bk_1 = 10\bmu$ and $\bk_2 = 0.1\bmu$ where $\bmu$ is the context centroid. 
Cosine similarity gives both identical scores: $\cos(\bk_1, \bmu) = \cos(\bk_2, \bmu) = 1$ (maximum similarity). Yet $\bk_1$ is 100$\times$ larger than $\bk_2$—geometrically very different. L2 distance correctly distinguishes them: $\|\bk_1 - \bmu\|_2 = 9\|\bmu\|_2$ vs $\|\bk_2 - \bmu\|_2 = 0.9\|\bmu\|_2$ (10$\times$ difference in score).


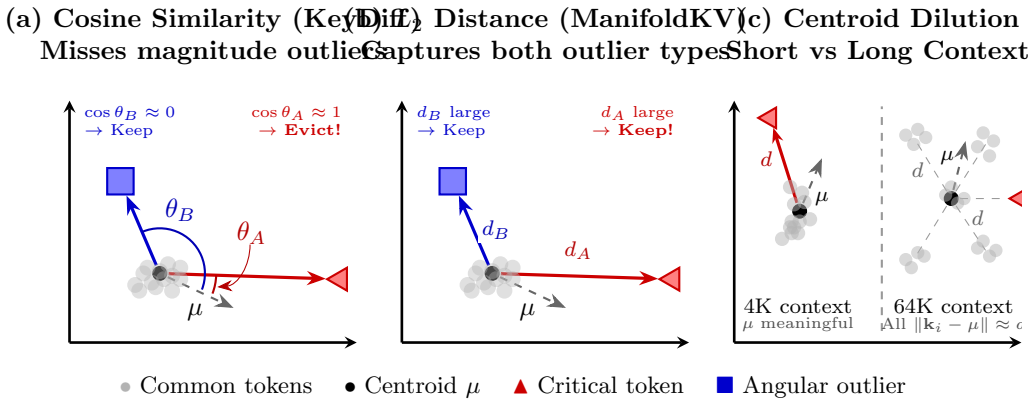
\begin{figure*}[ht]
    \centering
    \input{figures/figure_1.tex}
    \caption{\textbf{Geometric Intuition and the Centroid Dilution Problem.} \textbf{(a)} Cosine similarity (KeyDiff) captures only angular deviation---Token A (a radial outlier with $\bk_A = 2\bmu$) has $\cos \theta_A \approx 1$ and is incorrectly evicted. \textbf{(b)} L2 distance (ManifoldKV) captures both angular and radial deviation, correctly retaining both outliers. \textbf{(c)} The Centroid Dilution Problem: at short contexts (4K), tokens cluster around few themes and the centroid $\bmu$ is meaningful---outliers are clearly separable. At long contexts ($>$64K), tokens span many clusters; the centroid converges to a meaningless grand mean where all tokens appear equidistant.}
    \label{fig:intuition}
\end{figure*}

\textbf{Our Solution.} We propose \method{}, which scores tokens by their \emph{Euclidean} (L2) distance from the centroid:
\begin{equation}
    s_i = \|\bk_i - \bmu\|_2
\end{equation}
This simple change captures both angular and radial deviation. On the RULER benchmark~\citep{hsieh2024ruler}, \method{} achieves \textbf{95.7\%} accuracy at 4K--16K contexts across four architectures, outperforming attention-based SnapKV (84.0\%) by +11 points---without any model-specific tuning. Critically, ManifoldKV excels on \textbf{3-key needle-in-haystack (NIAH)} tasks: when multiple important tokens must be preserved, L2's magnitude awareness prevents \emph{directional collision}, outperforming KeyDiff by +15 points at aggressive compression.

\textbf{The long-context challenge.} While L2 distance excels at short-to-medium contexts (4K-16K), we discover a failure mode at $>$64K context-length: accuracy collapses from 82.3\% at 32K to just 35.2\% at 64K. 
We term this the \textbf{Centroid Dilution Problem}. When averaging over 64K diverse tokens spanning multiple topics and semantic domains, the centroid $\bmu$ becomes a meaningless ``center of mass'' that represents no coherent concept and cannot act as a discriminator of token importance. 
All tokens then appear approximately equidistant, destroying discriminative power.

As a solution to the Centroid Dilution Problem, we propose \windowmethod{}, which computes local centroids over sliding windows (e.g., 4K tokens). Each window maintains its semantic coherence, preserving the centroid's discriminative power. 
At 64K context, this recovers accuracy to \textbf{84.3\%}---a 49 percentage-point improvement over global L2 and +3.2 points over KeyDiff.

\textbf{Contributions.} Our three main contributions address distinct failure modes of existing methods:
\begin{enumerate}
    \item \textbf{Multi-Key Retrieval.} When multiple important tokens must be preserved, cosine-based methods suffer from \emph{directional collision}. 
    ManifoldKV's magnitude-aware L2 scoring outperforms KeyDiff by \textbf{+15.4 points} on 3-key NIAH at 50\% compression (92.4\% vs 77.0\%), demonstrating that magnitude information is critical for multi-needle retrieval.
    \item \textbf{Long-Context SOTA.} We identify the \emph{Centroid Dilution Problem}—global centroids become meaningless at 64K+ tokens, causing accuracy collapse (35.2\%). 
    WindowedManifoldKV with local sliding-window centroids recovers \textbf{+49 points} to 84.3\%, \textbf{+3.2 over KeyDiff}, achieving state-of-the-art on long-context compression.
    \item \textbf{Universal Manifold Structure:} Key vectors occupy a $\sim$9-dimensional manifold (Two-NN~\citep{facco2017estimating} estimate: 8.2--8.9) across all architectures, explaining zero-shot cross-model generalization (95--96\% with $\pm$0.3\% variance) and validating our $O(k)$ sample complexity advantage over cosine's fundamental failure on radial outliers.
    \item \textbf{Geometric vs Attention-Based Dominance:} ManifoldKV outperforms attention-based SnapKV by \textbf{+11 points} overall on RULER, demonstrating that geometric outlier detection is superior for retrieval tasks.
\end{enumerate}

\textbf{Paper Organization.} Section~\ref{sec:background} reviews KV cache compression and geometric scoring. Section~\ref{sec:method} presents ManifoldKV and WindowedManifoldKV. Section~\ref{sec:theory} provides theoretical analysis. Section~\ref{sec:experiments} validates our claims across 4 models and context lengths up to 64K.

%% file: figures/figure_1.tex

\begin{tikzpicture}

\begin{groupplot}[
  group style={group size=3 by 1, horizontal sep=6mm},
  width=5.4cm, height=4.8cm,
  xmin=-0.5,xmax=3.0, ymin=-0.3,ymax=1.8,
  axis lines=left, ticks=none,
  every axis/.append style={font=\small, line width=1.0pt},
  title style={font=\small\bfseries, align=center, at={(0.5,1.04)}, anchor=south},
]

\nextgroupplot[
  title={(a) Cosine Similarity (KeyDiff)\\Misses magnitude outliers},
]

\addplot[only marks, mark=*, mark size=2.5, gray!60, opacity=0.4] coordinates {
  (0.3,0.2) (0.4,0.3) (0.35,0.15) (0.45,0.35) (0.5,0.22)
  (0.55,0.18) (0.6,0.32) (0.65,0.25) (0.7,0.38) (0.75,0.28) (0.75,0.15)
  (0.85,0.25)
  (0.85,0.35)
};

\node[circle, fill=black, inner sep=2pt] (mu1) at (axis cs:0.6,0.3) {};
\node[anchor=south west, font=\small\bfseries] at (axis cs:0.8,-0.2) {$\mu$};

\draw[dashed, thick, black!60, -{Stealth[length=2.5mm]}]
  (mu1) -- (axis cs:1.5,0.0);

\node[rectangle, draw=blue!80!black, fill=blue!50, thick, minimum size=10pt, inner sep=0pt]
  (B1) at (axis cs:0.12,1.1) {};

\node[regular polygon, regular polygon sides=3, draw=red!80!black, fill=red!50,
      thick, minimum size=10pt, inner sep=0pt, shape border rotate=90]
  (A1) at (axis cs:2.8,0.25) {};

\draw[very thick, blue!80!black, -{Stealth[length=2.5mm]}] (mu1) -- (B1);
\draw[very thick, red!80!black, -{Stealth[length=2.5mm]}] (mu1) -- (A1);

\draw[blue!70!black, thick]
  let \p1=($(B1)-(mu1)$), \n1={atan2(\y1,\x1)},
      \p2=($(1.7,0.0)-(mu1)$), \n2={atan2(\y2,\x2)} in
  ($(mu1) + (\n1:0.6cm)$) arc[start angle=\n1, end angle=\n2, radius=0.6cm];
\node[blue!70!black, font=\small, fill=white, inner sep=1.5pt]
  at (axis cs:0.85,0.85) {$\theta_B$};

\draw[red!70!black, thick]
  let \p1=($(A1)-(mu1)$), \n1={atan2(\y1,\x1)},
      \p2=($(1.7,0.0)-(mu1)$), \n2={atan2(\y2,\x2)} in
  ($(mu1) + (\n1:0.75cm)$) arc[start angle=\n1, end angle=\n2, radius=0.75cm];

\node[red!70!black, font=\small, fill=white, inner sep=1.5pt]
  at (axis cs:1.72,0.65) {$\theta_A$};
\draw[red!70!black, -{Stealth[length=2mm, width=1.5mm]}, thin] (axis cs:1.70,0.55)
  to[out=-90, in=5] (axis cs:1.3,0.182);

\node[anchor=north west, font=\tiny, align=left, text=blue!80!black]
  at (axis cs:-0.45,1.85) {$\cos \theta_B \approx 0$ \\ $\rightarrow$ Keep};
\node[anchor=north east, font=\tiny, align=right, text=red!80!black]
  at (axis cs:2.95,1.85) {$\cos \theta_A \approx 1$\\ $\rightarrow$ \textbf{Evict!}};

\nextgroupplot[
  title={(b) $L_2$ Distance (ManifoldKV)\\Captures both outlier types},
]

\addplot[only marks, mark=*, mark size=2.5, gray!60, opacity=0.4] coordinates {
  (0.3,0.2) (0.4,0.3) (0.35,0.15) (0.45,0.35) (0.5,0.22)
  (0.55,0.18) (0.6,0.32) (0.65,0.25) (0.7,0.38) (0.75,0.28)
  (0.75,0.15)
  (0.85,0.25)
  (0.85,0.35)
};

\node[circle, fill=black, inner sep=2pt] (mu2) at (axis cs:0.6,0.3) {};
\node[anchor=south west, font=\small\bfseries] at (axis cs:0.8,-0.2) {$\mu$};

\draw[dashed, thick, black!60, -{Stealth[length=2.5mm]}]
  (mu2) -- (axis cs:1.5,0.0);

\node[rectangle, draw=blue!80!black, fill=blue!50, thick, minimum size=10pt, inner sep=0pt]
  (B2) at (axis cs:0.12,1.1) {};
\node[regular polygon, regular polygon sides=3, draw=red!80!black, fill=red!50,
      thick, minimum size=10pt, inner sep=0pt, shape border rotate=90]
  (A2) at (axis cs:2.8,0.25) {};

\draw[very thick, blue!80!black, -{Stealth[length=2.5mm]}] (mu2) -- (B2);
\draw[very thick, red!80!black, -{Stealth[length=2.5mm]}] (mu2) -- (A2);

\node[red!70!black, font=\scriptsize, fill=white, inner sep=1.5pt]
  at (axis cs:1.65,0.5) {$d_A$};
\node[blue!70!black, font=\scriptsize, fill=white, inner sep=1.5pt]
  at (axis cs:0.65,0.68) {$d_B$};

\node[anchor=north west, font=\tiny, align=left, text=blue!80!black]
  at (axis cs:-0.45,1.85) {$d_B$ large\\$\rightarrow$ Keep};
\node[anchor=north east, font=\tiny, align=right, text=red!80!black]
  at (axis cs:2.95,1.85) {$d_A$ large\\$\rightarrow$ \textbf{Keep!}};

\nextgroupplot[
  title={(c) Centroid Dilution\\Short vs Long Context},
  xmin=-0.3, xmax=3.3, ymin=-0.2, ymax=1.9,
]

\addplot[only marks, mark=*, mark size=2.2, gray!60, opacity=0.5] coordinates {
  (0.3,0.7) (0.45,0.8) (0.4,0.8) (0.5,0.75) (0.4,1.15)
  (0.55,1.1) (0.45,1.0) (0.58,1.02) (0.42,0.85) 
  (0.42,0.75) (0.52,0.85)  
  (0.65,0.87)
};


\node[circle, fill=black, inner sep=2pt] (mu_short) at (axis cs:0.52,0.94) {};
\node[anchor=south, font=\scriptsize\bfseries] at (axis cs:0.8,0.9) {$\mu$};

\node[regular polygon, regular polygon sides=3, draw=red!80!black, fill=red!50,
      thick, minimum size=9pt, inner sep=0pt, shape border rotate=90]
  (outlier_short) at (axis cs:0.15,1.75) {};

\draw[very thick, red!80!black, -{Stealth[length=2mm]}] (mu_short) -- (outlier_short);
\node[red!70!black, font=\scriptsize] at (axis cs:0.1,1.4) {$d$};

\draw[dashed, thick, black!60, -{Stealth[length=2.5mm]}]
  (mu_short) -- (axis cs:0.8,1.40);

\node[anchor=north, font=\scriptsize, align=center] at (axis cs:0.5,0.28) {4K context};
\node[anchor=north, font=\tiny, align=center, text=black!70] at (axis cs:0.5,0.12) {$\mu$ meaningful};

\draw[thick, black!40, dashed] (axis cs:1.55,-0.1) -- (axis cs:1.55,1.85);

\addplot[only marks, mark=*, mark size=2, gray!60, opacity=0.5] coordinates {
  (1.85,1.6) (2.0,1.65) (1.95,1.52) (2.1,1.58)
  (2.75,1.5) (2.9,1.42) (2.82,1.62) (2.95,1.55)
  (2.35,1.08) (2.5,1.01) (2.42,1.15) (2.58,1.05)
  (1.85,0.52) (2.0,0.58) (1.92,0.45) (2.08,0.55)
  (2.75,0.62) (2.9,0.55) (2.82,0.68) (2.98,0.6)
};

\node[circle, fill=black, inner sep=2pt] (mu_long) at (axis cs:2.42,1.05) {};
\node[anchor=south west, font=\scriptsize\bfseries] at (axis cs:2.15,1.25) {$\mu$};

\draw[thin, black!40, dashed] (mu_long) -- (axis cs:1.95,1.58);
\draw[thin, black!40, dashed] (mu_long) -- (axis cs:2.85,1.52);
\draw[thin, black!40, dashed] (mu_long) -- (axis cs:1.95,0.52);
\draw[thin, black!40, dashed] (mu_long) -- (axis cs:2.85,0.6);

\draw[dashed, thick, black!60, -{Stealth[length=2.5mm]}]
  (mu_long) -- (axis cs:2.6,1.55);

\node[regular polygon, regular polygon sides=3, draw=red!80!black, fill=red!50,
      thick, minimum size=9pt, inner sep=0pt, shape border rotate=90]
  (outlier_long) at (axis cs:3.3,1.05) {};
\draw[thin, black!40, dashed] (mu_long) -- (outlier_long);

\node[black!60, font=\scriptsize] at (axis cs:2.0,1.3) {$d$};
\node[black!60, font=\scriptsize] at (axis cs:2.75,0.88) {$d$};

\node[anchor=north, font=\scriptsize, align=center] at (axis cs:2.45,0.28) {64K context};
\node[anchor=north, font=\tiny, align=center, text=black!70] at (axis cs:2.45,0.12) {All $\|\bk_i - \mu\| \approx d$};

\end{groupplot}

\node[anchor=north, font=\footnotesize, align=center] at ($(current bounding box.south) + (0,-0.25cm)$) {
  \begin{tabular}{cccc}
    \textcolor{gray!60}{$\bullet$} Common tokens &
    \textcolor{black}{$\bullet$} Centroid $\mu$ &
    \textcolor{red!80!black}{$\blacktriangle$} Critical token &
    \textcolor{blue!80!black}{$\blacksquare$} Angular outlier
  \end{tabular}
};

\end{tikzpicture}

%% file: sections/03_background.tex
\section{Background and Related Work}
\label{sec:background}

We now introduce the KV-cache compression problem and review existing work in this domain.

\subsection{Setup and Notation.}
\label{sec:background_notation}

\textbf{Attention.}
Transformer models~\citep{vaswani2017attention} compute attention over a sequence of $N$ tokens. 
For each token position $i$, the model computes query ($Q$), key ($K$), and value ($V$) vectors. 
Attention at position $i$ is:
\begin{equation}
    \text{Attention}(q_i, K, V) = \sum_{j=1}^{N} \frac{\exp(q_i^\top k_j / \sqrt{d})}{\sum_{\ell=1}^{N} \exp(q_i^\top k_\ell / \sqrt{d})} v_j
\end{equation}
where $q_i$ is the query at position $i$, and $k_j, v_j$ are the key and value for position $j$.

\textbf{KV-Cache.}
During autoregressive generation, the model produces tokens sequentially: $t_1, t_2, \ldots, t_N$. 
At each step $t_i$, the model must attend to all previous tokens $\{t_1, \ldots, t_{i-1}\}$. 
Naively, this requires recomputing key and value vectors for all past tokens at every step.

The \emph{KV-cache} eliminates this redundancy by storing previously computed keys and values~\citep{pope2023efficiently, kwon2023efficient}, and works as follows.
After processing token $t_i$, cache its key $k_i$ and value $v_i$
At step $i+1$, load cached $\{k_1, \ldots, k_i\}$ and $\{v_1, \ldots, v_i\}$ from memory
and compute only the new key $k_{i+1}$ and value $v_{i+1}$.

While accelerating inference, the KV-cache creates a memory bottleneck. 
For a model with $L$ layers, $H$ attention heads, head dimension $d_h$, and context length $N$, the cache stores:
$\text{Cache size} = 2 \cdot L \cdot H \cdot N \cdot d_h \cdot$ bytes per element.
The factor of 2 accounts for both keys and values. 
Memory grows \emph{linearly} with context length $N$. 
For Llama-3.1-8B processing 64K tokens, the cache alone requires \textbf{8.6 GB}~\citep{kwon2023efficient} of memory. 
This severely limits long-context deployment, especially when serving multiple users concurrently.
Cache compression methods address this by evicting less important tokens, 
reducing $N$ to a smaller budget $M \ll N$ while maintaining text generation quality. 

\textbf{Token Eviction.}
We now formalize the token eviction process and explain how compressed caches integrate with standard attention mechanisms.

Consider a context of $N$ tokens $\mathcal{T} = \{t_1, \ldots, t_N\}$,
with each token $t_i$ having an associated key vector $\bk_i \in \mathbb{R}^d$ 
and value vector $\bv_i \in \mathbb{R}^d$ computed by the model, where $d$ is the model's hidden dimension. 
We organize these into matrices $K \in \mathbb{R}^{N \times d}$ and $V \in \mathbb{R}^{N \times d}$,
where row $i$ corresponds to token $t_i$.
Compression methods---including eviction~\citep{zhang2023h2o, li2024snapkv}, quantization~\citep{hooper2024kvquant}, and hybrid approaches~\citep{liu2024kivi}---assign importance scores $s_i$ to each token $i$ and evict low-scoring tokens. 
Given a compression ratio $\rho \in (0,1)$, the cache budget is $M = \lfloor (1-\rho) N \rfloor$. 
The eviction method must select which $M$ tokens to retain.
The key design choice is the scoring function.

A KV-cache compression method defines a scoring function $s: \mathbb{R}^d \rightarrow \mathbb{R}$ 
that assigns an importance score $s_i$ to each token $t_i$, given by
$s_i = s(\bk_i) \quad \text{for } i = 1, \ldots, N$

We then select the top-$M$ scoring tokens:
\begin{equation}
    \mathcal{I} = \text{TopK}(\{s_1, \ldots, s_N\}, M) \subseteq \{1, \ldots, N\}
\end{equation}
where $|\mathcal{I}| = M$ and $\mathcal{I}$ contains the indices of tokens to \emph{retain}.
We then keep only those selected tokens, creating a smaller KV-cache matrix $K'$ and $V'$.

After eviction, subsequent attention computations operate directly on $K'$ and $V'$:
$\text{Attention}(Q, K', V') = \text{softmax}\left(\frac{QK'^\top}{\sqrt{d}}\right) V'$,
where $Q \in \mathbb{R}^{N_q \times d}$ are query vectors for the next $N_q$ tokens to generate. 

\subsection{Related Work}
\label{sec:related}

\textbf{Benchmarks/Datasets.}

We evaluate on RULER~\citep{hsieh2024ruler}, a synthetic benchmark for long-context language models with 6,497 samples across context lengths from 4K to 128K tokens.
We focus on tasks that test retrieval and aggregation capabilities under compression.

\textbf{Needle-in-a-Haystack (NIAH).} NIAH tasks require retrieving key facts (``needles'') from long contexts (the ``haystack'').
Task variants include: single-key, multi-key, and multi-query retrieval.
Word extraction tasks include: common and frequent words extraction.
These tasks require attending to many tokens throughout the context, testing whether compression preserves global information.

Multi-key NIAH is particularly challenging, where performance drops from $>$95\% (single-key) to 77--92\% (3-key NIAH).
This task requires preserving multiple important tokens simultaneously, and
we find that existing methods suffer from \emph{directional collision},
where tokens in similar directions but different magnitudes are conflated (Section~\ref{sec:multikey_exp}).
Long contexts (64K) with aggressive compression (25\%) also show differences between methods.
We report average accuracy across RULER tasks, with detailed analysis of multi-key NIAH where ManifoldKV shows largest gains over baselines.

\textbf{Attention-Based Eviction.}
H2O \citep{zhang2023h2o} keeps tokens with highest cumulative attention (54.5\% accuracy on RULER at 64K). 
StreamingLLM \citep{xiao2024streamingllm} retains attention sinks plus a sliding window. SnapKV \citep{li2024snapkv} refines attention-based scoring with observation windows, achieving 83.9\% at 64K---the state-of-the-art for attention-based methods.

\textbf{Geometry-Based Eviction.}
Geometric methods score tokens based on key vector properties alone, avoiding attention computation. \textbf{KNorm}~\citep{knorm2024} uses magnitude $\|\bk_i\|_2$ alone (52.8\% on RULER). \textbf{KeyDiff}~\citep{park2025keydiff} uses cosine distance from the mean: $s_i = 1 - \cos(\bk_i, \bmu)$ where $\bmu = \frac{1}{N}\sum_i \bk_i$, achieving 81.1\% at 64K---a 28-point improvement over KNorm. \textbf{CriticalKV}~\citep{feng2025identify} extends geometric scoring to value vectors.
Cosine similarity \emph{discards magnitude}: $\cos(\alpha \bk_i, \bmu) = \cos(\bk_i, \bmu)$ for any $\alpha > 0$. 
This means radial outliers (tokens parallel to common directions but with extreme magnitudes) cannot be distinguished from typical tokens. Our work uses L2 distance $s_i = \|\bk_i - \bmu\|_2$, which captures both angular and radial deviation.

\textbf{Orthogonal Approaches.}
\textbf{AdaKV}~\citep{feng2024adakv} proposes adaptive per-head budget allocation---orthogonal to scoring. ManifoldKV integrates as a drop-in scorer within AdaKV. \textbf{DuoAttention}~\citep{xiao2024duoattention} specializes attention heads but requires calibration. Quantization methods~\citep{hooper2024kvquant, liu2024kivi} compress precision rather than evicting tokens.

\textbf{Manifold Structure.}
Our analysis builds on observations that neural representations lie on low-dimensional manifolds \citep{ansuini2019intrinsic,aghajanyan2021intrinsic}. We contribute the first application of manifold analysis to KV cache compression, showing key vectors occupy a universal $\sim$9D manifold.

%% file: sections/04_method.tex
\section{Method: ManifoldKV}
\label{sec:method}

We present \method{}, a simple yet effective scoring function for KV cache compression. We first introduce the core L2-based scoring (Section~\ref{sec:magnitude}), then analyze its failure mode at very long contexts (Section~\ref{sec:dilution}), and finally propose a windowed variant to address this limitation (Section~\ref{sec:windowed}).

\subsection{Magnitude-Aware Outlier Detection}
\label{sec:magnitude}

We propose that critical tokens differ from common tokens in \emph{two} geometric properties.
\textbf{(a) Angular deviation}: They point in unusual directions relative to the centroid (captured by cosine distance).
\textbf{(b) Radial deviation}: They have unusual magnitudes (ignored by cosine distance).

\begin{definition}[ManifoldKV Score]
Let $\bmu = \frac{1}{N}\sum_{i=1}^N \bk_i$ be the context centroid. The \method{} score for token $i$ is:
\begin{equation}
    s_i = \|\bk_i - \bmu\|_2
    \label{eq:manifoldkv}
\end{equation}
Tokens with high scores are geometric outliers and are retained during compression.
\end{definition}

\textbf{Geometric Decomposition.} To understand why L2 captures more information than cosine, we decompose the key vector as $\bk_i = r_i \hat{\bk}_i$, where $r_i = \|\bk_i\|_2$ is the magnitude and $\hat{\bk}_i = \bk_i / r_i$ is the unit direction vector. The squared L2 distance expands as:
\begin{align}
    \|\bk_i - \bmu\|_2^2 &= \|\bk_i\|^2 + \|\bmu\|^2 - 2\bk_i^\top \bmu \nonumber \\
    &= r_i^2 + \|\bmu\|^2 - 2r_i \|\bmu\| \cos(\bk_i, \bmu) \label{eq:decomposition}
\end{align}

This reveals three terms:
(a) $r_i^2$: The squared magnitude of the key vector,
(b) $\|\bmu\|^2$: A constant (same for all tokens), and
(c) $-2r_i \|\bmu\| \cos(\bk_i, \bmu)$: The angular alignment, scaled by magnitude.

In contrast, cosine similarity isolates only the angular component:
\begin{equation}
    \cos(\bk_i, \bmu) = \frac{\bk_i^\top \bmu}{\|\bk_i\| \|\bmu\|} = \frac{\hat{\bk}_i^\top \bmu}{\|\bmu\|}
\end{equation}
The magnitude $r_i$ cancels entirely. A token with $\bk_i = 10 \cdot \bmu$ (10$\times$ the centroid magnitude, same direction) receives identical cosine score to a token with $\bk_i = 0.1 \cdot \bmu$, yet these are geometrically very different. L2 distance correctly distinguishes them.

\begin{algorithm}[t]
\caption{\method{}: L2 Distance from Centroid}
\label{alg:manifoldkv}
\begin{algorithmic}
\STATE \textbf{Input:} Key tensor $K \in \R^{N \times d}$, compression ratio $\rho \in (0,1)$
\STATE \textbf{Output:} Indices $\mathcal{I}$ of tokens to retain
\STATE
\STATE $\bmu \leftarrow \frac{1}{N}\sum_{i=1}^N K_i$ \hfill \COMMENT{Compute centroid: $O(Nd)$}
\STATE $s_i \leftarrow \|K_i - \bmu\|_2 \;\; \forall i$ \hfill \COMMENT{L2 distances: $O(Nd)$}
\STATE $\mathcal{I} \leftarrow \text{TopK}(\mathbf{s}, \lfloor (1-\rho) N \rfloor)$ \hfill \COMMENT{Select top scores: $O(N \log N)$}
\RETURN $\mathcal{I}$
\end{algorithmic}
\end{algorithm}

\textbf{Complexity.} Algorithm~\ref{alg:manifoldkv} runs in $O(Nd + N \log N)$ time, dominated by the centroid computation and sorting. This is negligible compared to attention's $O(N^2 d)$ complexity. In practice, \method{} adds $<$0.5ms latency at 64K context (see Section~\ref{sec:experiments}).

\subsection{The Centroid Dilution Problem}
\label{sec:dilution}

\begin{table}[h]
\centering
\small
\caption{Average RULER accuracy (Llama-3.1-8B-Instruct) under standard compression settings: 20\% compression for 4K--32K and 25\% for 64K. Global (single-centroid) ManifoldKV collapses at 64K due to centroid dilution.}
\label{tab:dilution_intro}
\begin{tabular}{@{}lcccc@{}}
\toprule
\textbf{Context Length} & \textbf{4K} & \textbf{16K} & \textbf{32K} & \textbf{64K} \\
\midrule
\method{} Accuracy & 95.7\% & 92.8\% & 82.3\% & \textcolor{red}{35.2\%} \\
\bottomrule
\end{tabular}
\end{table}

While L2 distance excels at short-to-medium contexts (4K--32K tokens), we observe a \emph{catastrophic failure} at 64K (See Table~\ref{tab:dilution_intro} and Figure~\ref{fig:intuition}c).

\textbf{Diagnosis: Semantic Averaging.} At 64K tokens, the context typically spans multiple topics, entities, and semantic domains. The centroid $\bmu = \frac{1}{N}\sum_i \bk_i$ averages over this diverse set, converging to a ``center of mass'' that represents no coherent concept. When $\bmu$ is semantically meaningless, \emph{all} tokens appear approximately equidistant from it, and L2 scores lose discriminative power.


\textbf{Centroid Dilution (Informal).} When tokens span $K$ diverse semantic clusters, the global centroid converges to the grand mean of all clusters---a point representing nothing in particular. As $K$ grows, all tokens become approximately equidistant from this meaningless center, and L2 scores lose discriminative power. The sharp accuracy drop from 82.3\% (32K) to 35.2\% (64K) in Table~\ref{tab:dilution_intro} empirically confirms this prediction: beyond $\sim$32K tokens, global centroids become ineffective.

\subsection{Windowed Local Centroids}
\label{sec:windowed}

To combat centroid dilution, we compute \emph{local} centroids over sliding windows:

\begin{algorithm}[t]
\caption{\windowmethod{}: Local Centroids for Long Contexts}
\label{alg:windowed}
\begin{algorithmic}
\STATE \textbf{Input:} Keys $K \in \R^{N \times d}$, window size $W$, compression ratio $\rho$
\STATE \textbf{Output:} Indices $\mathcal{I}$ of tokens to retain
\STATE
\STATE $\mathbf{s} \leftarrow \mathbf{0}_N$ \hfill \COMMENT{Initialize scores}
\FOR{$t = 0, W, 2W, \ldots$ \textbf{while} $t < N$}
    \STATE $K_w \leftarrow K[t : \min(t+W, N)]$ \hfill \COMMENT{Extract window}
    \STATE $\bmu_w \leftarrow \text{mean}(K_w)$ \hfill \COMMENT{Local centroid}
    \STATE $\mathbf{s}[t:\min(t+W,N)] \leftarrow \|K_w - \bmu_w\|_2$ \hfill \COMMENT{Local L2 scores}
\ENDFOR
\STATE $\mathcal{I} \leftarrow \text{TopK}(\mathbf{s}, \lfloor (1-\rho) N \rfloor)$ \hfill \COMMENT{Global selection}
\RETURN $\mathcal{I}$
\end{algorithmic}
\end{algorithm}

\textbf{Why Windowing Works.} Each window of $W$ tokens (e.g., $W = 4096$) spans a limited semantic scope---typically a few paragraphs or a single topic. The local centroid $\bmu_w$ thus represents ``typical'' content \emph{within that region}, preserving discriminative power:

\begin{proposition}[Local Centroid Preservation]
\label{prop:local}
If tokens in window $[t, t+W)$ come from at most $K_w$ semantic clusters with $K_w \ll W / \sigma^2$, then the local centroid $\bmu_w$ remains within $O(\sigma)$ of the dominant cluster mean, and L2 scores retain discriminative power.
\end{proposition}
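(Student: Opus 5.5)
The statement is informal, so I would first fix a model that makes it precise. Let window $w$ contain $W$ keys, each drawn from one of $K_w$ clusters with means $\bm{m}_1,\dots,\bm{m}_{K_w}$ and isotropic sub-Gaussian noise of variance proxy $\sigma^2$ per direction. Let $\pi_c$ denote the fraction of window tokens from cluster $c$, and let $c^\ast$ be the dominant cluster. The centroid then splits as
\[
\bmu_w=\sum_c \pi_c \bm{m}_c+\bar{\bm{\varepsilon}}_w,
\qquad
\bmu_w-\bm{m}_{c^\ast}=\sum_{c\neq c^\ast}\pi_c(\bm{m}_c-\bm{m}_{c^\ast})+\bar{\bm{\varepsilon}}_w ,
\]
where $\bar{\bm{\varepsilon}}_w$ is the average noise. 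The proof is then a bias plus fluctuation bound, followed by a margin argument for the scores.

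For the fluctuation term, a standard sub-Gaussian concentration bound gives $\|\bar{\bm{\varepsilon}}_w\|_2=O(\sigma\sqrt{d/W})$ with high probability. On the roughly $9$-dimensional key manifold from the introduction, $d$ can be replaced by the intrinsic dimension. This term is $O(\sigma)$ as soon as $W\gtrsim d$.

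For the bias term, I would read the hypothesis $K_w\ll W/\sigma^2$ as implying dominance. The minority clusters together carry mass $\sum_{c\neq c^\ast}\pi_c=\epsilon_w$, and the statement needs $\epsilon_w\max_c\|\bm{m}_c-\bm{m}_{c^\ast}\|=O(\sigma)$. This is the main obstacle: the stated condition does not by itself force a small minority fraction. I would make an explicit assumption that the non-dominant clusters contribute at most $O(\sigma)$ after weighting, i.e.\ $\epsilon_w\Delta\le C\sigma$ with $\Delta$ the maximal inter-cluster separation. The condition $K_w\ll W/\sigma^2$ is then used only to guarantee that each cluster has enough samples for its empirical mean to concentrate to within $\sigma$, which keeps the per-cluster noise controlled. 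I would state honestly that the result holds under this sharpened hypothesis.

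For discriminative power, I would combine the two bounds into $\|\bmu_w-\bm{m}_{c^\ast}\|\le C'\sigma$ and apply the triangle inequality. For a typical token, $s_i=\|\bk_i-\bmu_w\|\le\|\bm{\varepsilon}_i\|+C'\sigma=O(\sigma\sqrt{d})$. For an outlier at distance $D$ from $\bm{m}_{c^\ast}$, $s_i\ge D-C'\sigma$. So whenever $D\gg\sigma\sqrt{d}$, outliers score strictly above typical tokens with high probability, and TopK retains them. For contrast, I would note that the global centroid over $K\to\infty$ well-separated clusters lies at distance $\Theta(\Delta)$ from every cluster, so the two score ranges overlap. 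This matches the informal Centroid Dilution argument in Section~\ref{sec:dilution}.
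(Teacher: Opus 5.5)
The paper never proves this proposition. It states it and offers only the heuristic that a 4K window ``typically spans a single topic'' (Section~\ref{sec:windowed}, Appendix~\ref{app:dilution_analysis}). The closest rigorous argument is the single-population proof in Appendix~\ref{app:theory}: it bounds $\|\hat{\bmu}-\bmu\|_2\le\sigma\sqrt{(k+\log(1/\delta))/n}$ and then separates outliers from common tokens by the triangle inequality. Your proposal uses the same skeleton: a centroid fluctuation bound, then a triangle-inequality margin. It also adds what the paper silently assumes away, the mixture bias $\sum_{c\neq c^\ast}\pi_c(\mathbf{m}_c-\mathbf{m}_{c^\ast})$. Your diagnosis that $K_w\ll W/\sigma^2$ cannot control this term is correct, and a counterexample settles it. Take two clusters with weights $0.51$ and $0.49$ at separation $\Delta\gg\sigma$. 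They satisfy $K_w=2\ll W/\sigma^2$ for every large $W$, yet $\|\bmu_w-\mathbf{m}_{c^\ast}\|_2\approx 0.49\,\Delta$. So the statement as written is false in the natural model, and an assumption like your $\epsilon_w\Delta\le C\sigma$ is necessary, not merely convenient. What you prove is a corrected statement, and saying so openly is the right call.

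Several details need tightening.
\begin{itemize}
\item Your claim that $K_w\ll W/\sigma^2$ is ``used only to guarantee that each cluster has enough samples'' is inaccurate. Your decomposition uses the empirical fractions $\pi_c$ and the pooled average $\bar{\boldsymbol{\varepsilon}}_w$, so per-cluster means never appear. The only sample-size condition you actually use is $W\gtrsim d+\log(1/\delta)$. Moreover, $W/K_w\gg\sigma^2$ bounds the average cluster size, not each one. Concentrating a cluster mean to within $\sigma$ needs $n_c\gtrsim d$, not $n_c\gtrsim\sigma^2$. Say plainly that the paper's condition plays no role.
\item Replacing $d$ by the intrinsic dimension is unjustified for isotropic noise in $\R^d$; it needs noise concentrated near a $k$-dimensional subspace. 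It is also unnecessary, since $W=4096\gg d=128$.
\item Your margin step compares outliers only with dominant-cluster tokens. Minority-cluster tokens score roughly $\|\mathbf{m}_c-\bmu_w\|_2$, possibly close to $\Delta$, so they compete with outliers for the TopK budget. Under your assumption there are at most $\epsilon_w W\le C\sigma W/\Delta$ of them. You should add that the budget exceeds this count plus the number of outliers, which is harmless at the paper's 75--80\% retention.
\item A bound holding for all $W$ tokens simultaneously is $\max_i\|\boldsymbol{\varepsilon}_i\|_2=O(\sigma(\sqrt d+\sqrt{\log W}))$, not $O(\sigma\sqrt d)$.
\item Your closing remark that a global centroid lies at distance $\Theta(\Delta)$ from every cluster depends on how the means are arranged. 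Keep it as motivation, not as part of the proof.
\end{itemize}
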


\textbf{Window Size Selection.} We find that $W = 4096$ works best, matching the context length where global \method{} achieves peak performance (95.7\%). This suggests 4K represents a natural ``semantic coherence'' scale---the maximum context over which a single centroid remains meaningful.

\textbf{Theoretical Justification.} Our manifold analysis (Section~\ref{sec:theory}) shows key vectors occupy a $k \approx 9$ dimensional space. Statistical theory suggests centroid estimation in $k$-D requires $O(k \log k) \sim 20$--50 samples for stability. While 4096 tokens vastly exceeds this minimum, empirically this window size achieves optimal accuracy (Table~\ref{tab:window_ablation}), suggesting 4K is the natural semantic coherence scale where a single centroid remains meaningful before topic drift occurs.

\begin{table}[h]
\centering
\small
\caption{Window size ablation at 64K context on RULER benchmark (Llama-3.1-8B, 25\% compression). 4K windows achieve optimal accuracy.}
\label{tab:window_ablation}
\begin{tabular}{@{}lccc@{}}
\toprule
\textbf{Window Size} & \textbf{64K Accuracy} & \textbf{$\Delta$ vs Global} & \textbf{$\Delta$ vs KeyDiff} \\
\midrule
Global (no window) & 35.2\% & --- & $-$45.9 \\
16K & 82.4\% & +47.2 & +1.3 \\
8K & 83.9\% & +48.7 & +2.8 \\
\rowcolor{green!10}
\textbf{4K} & \textbf{84.3\%} & \textbf{+49.1} & \textbf{+3.2} \\
2K & 83.8\% & +48.6 & +2.7 \\
\bottomrule
\end{tabular}
\end{table}

\textbf{Complexity.} \windowmethod{} has the same asymptotic complexity as the global variant: $O(Nd)$ for scoring plus $O(N \log N)$ for selection. The constant factor increases by $\lceil N/W \rceil$ centroids, but this is negligible in practice ($<$1ms overhead at 64K).

%% file: sections/05_theory.tex
\section{Theoretical Analysis}
\label{sec:theory}

We provide theoretical grounding for \method{}'s empirical success: L2 achieves $O(k)$ sample complexity where $k$ is the intrinsic dimension, while cosine fails on radial outliers regardless of sample size.

\subsection{Sample Complexity}
\label{sec:sample_complexity}

We assume common tokens concentrate near a $k$-dimensional subspace ($k \ll d$), with outliers at distance $\geq \epsilon$ from this subspace---validated empirically below ($k \approx 9$).

\textbf{L2 Sample Complexity.} Under this assumption, L2 distance identifies all outliers with $n = O(k \sigma^2 / \epsilon^2)$ samples, where $\sigma^2$ is the variance of common tokens. The key insight: concentration bounds on the $k$-dimensional subspace give centroid convergence rate $O(\sigma\sqrt{k/n})$, independent of the ambient dimension $d$.

\textbf{Cosine Failure.} Radial outliers $\bk_o = \alpha \bmu$ ($\alpha \gg 1$) have $\cos(\bk_o, \bmu) = 1$ (maximum similarity) despite being geometrically distant. Cosine is scale-invariant, so such outliers are undetectable regardless of sample size or algorithm sophistication. L2 correctly identifies them: $\|\bk_o - \bmu\| = (\alpha - 1)\|\bmu\| \gg 0$. Full proofs in Appendix~\ref{app:theory}.

\subsection{Universal Manifold Structure}
\label{sec:manifold_empirical}

We validate our low-dimensional assumption using the Two-NN estimator~\citep{facco2017estimating} (Table~\ref{tab:pca}):

\begin{table}[h]
\centering
\caption{\textbf{Manifold Dimension.} Key vectors occupy a universal $\sim$9D manifold regardless of architecture.}
\label{tab:pca}
\begin{tabular}{@{}lccc@{}}
\toprule
\textbf{Model} & \textbf{Head Dim} & \textbf{Two-NN} & \textbf{PCA (95\%)} \\
\midrule
Gemma-3-12B & 256 & \textbf{8.7} & 160 (63\%) \\
Qwen3-8B & 128 & \textbf{8.9} & 81 (63\%) \\
Ministral-8B & 128 & \textbf{8.2} & 83 (65\%) \\
Llama-3.1-8B & 128 & $\sim$\textbf{9} & $\sim$80 (63\%) \\
\bottomrule
\end{tabular}
\end{table}

Despite 2$\times$ head dimension difference (256 vs 128), intrinsic dimensionality is \textbf{remarkably consistent} at 8--9 dimensions across all models. This validates our $O(k)$ sample complexity and explains cross-model generalization. The low intrinsic dimension also explains why windowed centroids work: even 4K-token windows provide sufficient samples for stable centroid estimation.

\textbf{Practical Implications.} The universal 9D structure has three important consequences:
\begin{enumerate}[nosep]
    \item \textbf{Cross-model generalization:} Identical code achieves 94--96\% across architectures with $\pm$0.3\% variance (Section~\ref{sec:multimodel})—the manifold is architecture-invariant, enabling zero-shot transfer.
    \item \textbf{Window size validation:} $k=9$ requires $O(k \log k) \sim 20$--50 samples for stable centroids. Even 1K-token windows suffice theoretically, though 4K empirically optimizes the trade-off between semantic coherence and sufficient statistics (Section~\ref{sec:windowed}).
    \item \textbf{Sample complexity advantage:} L2 distance converges in $O(k)=O(9)$ samples on the manifold, while cosine similarity fundamentally fails on radial outliers regardless of sample size—a qualitative difference, not just quantitative.
\end{enumerate}

%% file: sections/06_experiments.tex
\section{Experiments}
\label{sec:experiments}

We evaluate \method{} across multiple dimensions: main benchmark performance (Section~\ref{sec:main_results_exp}), 64K long-context recovery (Section~\ref{sec:64k_exp}), cross-model generalization (Section~\ref{sec:multimodel}), and ablations (Section~\ref{sec:ablations}). Our experiments demonstrate state-of-the-art results with remarkable consistency across architectures.

\subsection{Experimental Setup}
\label{sec:setup}


\textbf{Models.} Primary: Llama-3.1-8B-Instruct. Cross-model validation: Qwen3-8B, Ministral-8B, Gemma-3-12B-IT (Section~\ref{sec:multimodel}).

\textbf{Benchmark.} RULER~\citep{hsieh2024ruler} contains 6,497 needle-in-haystack (NIAH) samples testing retrieval from long contexts. Tasks include single/multi-key NIAH, multi-query retrieval, and word extraction (CWE/FWE).

\textbf{Compression.} We use 20\% compression for 4K--32K contexts (retain 80\% of tokens) and 25\% for 64K (retain 75\%). These operating points follow prior work: KeyDiff~\citep{park2025keydiff} evaluates at $\sim$23\% compression, SnapKV~\citep{li2024snapkv} at 20--30\%, and AdaKV~\citep{feng2024adakv} at similar budgets. The slightly higher 64K compression reflects increased memory pressure at long contexts.

\textbf{Baselines.} We use existing state-of-the-art baselines, including SnapKV~\citep{li2024snapkv} (attention-based), KeyDiff~\citep{park2025keydiff} (cosine from mean), CriticalKV~\citep{feng2025identify} (value-aware), StreamingLLM~\citep{xiao2024streamingllm} (sliding window), 
and
DuoAttention~\citep{xiao2024duoattention} (head specialization, requires calibration).

\textbf{Framework.} We evaluate both standalone scorers and integration with AdaKV~\citep{feng2024adakv}, which adaptively allocates the fixed global KV budget across attention heads (head-wise budgets) based on estimated head importance. In our integration, AdaKV changes \emph{how many} tokens each head may retain, while the scorer (e.g., ManifoldKV vs.\ KeyDiff) determines \emph{which} tokens are retained within each head, isolating scoring effects from budget-allocation effects.

\subsection{Main Results}
\label{sec:main_results_exp}

\begin{table}[t]
\centering
\caption{\textbf{RULER Results (Llama-3.1-8B).} With AdaKV framework, ManifoldKV achieves 95.73\% vs KeyDiff's 95.66\%.}
\label{tab:main_results}
\begin{tabular}{@{}lcc@{}}
\toprule
\textbf{Method} & \textbf{Comp.} & \textbf{Acc.} \\
\midrule
\multicolumn{3}{@{}l}{\textit{With AdaKV Framework}} \\
\rowcolor{green!10}
ManifoldKV (Ours) & 0.20 & \textbf{95.73\%} \\
KeyDiff & 0.20 & 95.66\% \\
SnapKV & 0.20 & 83.97\% \\
\midrule
\multicolumn{3}{@{}l}{\textit{Architectural/Calibrated Methods}} \\
DuoAttention\textsuperscript{*} & 0.20 & 95.36\% \\
\midrule
\multicolumn{3}{@{}l}{\textit{Standalone Methods/Baselines}} \\
KeyDiff & 0.20 & 92.93\% \\
SnapKV & 0.20 & 83.97\% \\
CriticalKV (Value-aware) & 0.20 & 78.90\% \\
StreamingLLM & 0.20 & 59.30\% \\
\midrule
\multicolumn{3}{@{}l}{\textit{64K Context (Windowed)}} \\
\rowcolor{green!10}
Windowed-4K (Ours) & 0.25 & \textbf{84.29\%} \\
Windowed-8K (Ours) & 0.25 & 83.92\% \\
KeyDiff & 0.25 & 81.09\% \\
Global ManifoldKV & 0.25 & \textcolor{red}{35.2\%} \\
\bottomrule
\end{tabular}
\vspace{0.3em}
\parbox{\linewidth}{\footnotesize\raggedright \textsuperscript{*}DuoAttention requires pre-computed calibration patterns, whereas ManifoldKV is training-free.}
\end{table}

Table~\ref{tab:main_results} summarizes our results. \textbf{Key Findings} (corresponding to our three main contributions):
\begin{enumerate}
    \item \textbf{Long-Context SOTA (64K)}: WindowedManifoldKV achieves \textbf{84.3\%} at 64K context, recovering 49 points from centroid dilution and outperforming KeyDiff by \textbf{+3.2 points}. This is ManifoldKV's clearest advantage over cosine-based methods.
    
    \item \textbf{Multi-Key Retrieval Advantage}: ManifoldKV outperforms KeyDiff by \textbf{+15.4 points} on 3-key NIAH (niah\_multikey\_3) and \textbf{+7.2 points} on 2-key NIAH (niah\_multikey\_2) at 50\% compression. L2's magnitude preservation prevents \emph{directional collision} when multiple important tokens must be retained.
    
    \item \textbf{Geometric vs Attention-Based}: ManifoldKV outperforms SnapKV by \textbf{+11 points} overall, demonstrating that geometric outlier detection is fundamentally superior to attention-based eviction for retrieval tasks.
\end{enumerate}

\textbf{Note on Overall Accuracy}: At 4K--16K contexts with AdaKV, ManifoldKV (95.73\%) and KeyDiff (95.66\%) achieve comparable overall performance (Figure~\ref{fig:main_results}). ManifoldKV's advantages emerge in the specific scenarios above.

\begin{figure}[!htbp]
    \centering
    \includegraphics[width=0.95\columnwidth]{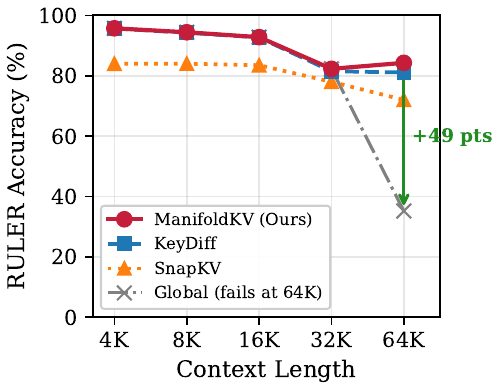}
    \caption{\textbf{Performance Across Context Lengths.} ManifoldKV matches KeyDiff at 4K--32K. At 64K, Global ManifoldKV collapses to 35.2\% (centroid dilution); WindowedManifoldKV recovers +49 pts to 84.3\%.}
    \label{fig:main_results}
\end{figure}

\subsection{64K Context: Recovering from Centroid Dilution}
\label{sec:64k_exp}

At 64K context, global \method{} collapses to 35.2\% due to centroid dilution (Section~\ref{sec:dilution}). \windowmethod{} with local centroids recovers performance:

\begin{table}[t]
\centering
\caption{\textbf{64K Results.} Windowed variants dominate.}
\label{tab:64k_full}
\begin{tabular}{@{}llcc@{}}
\toprule
\textbf{Method} & \textbf{Type} & \textbf{Acc.} & \textbf{$\Delta$} \\
\midrule
\rowcolor{green!10}
Windowed-4K & Local L2 & \textbf{84.3\%} & \textbf{+3.2} \\
Windowed-8K & Local L2 & 83.9\% & +2.8 \\
Windowed-16K & Local L2 & 82.4\% & +1.3 \\
\midrule
KeyDiff & Cosine & 81.1\% & -- \\
\midrule
\textcolor{red}{Global} & \textcolor{red}{Global L2} & \textcolor{red}{35.2\%} & \textcolor{red}{-45.9} \\
\bottomrule
\end{tabular}
\end{table}

\textbf{Analysis:} Table~\ref{tab:64k_full} shows all windowed variants beat KeyDiff. Smaller windows (4K) work best, achieving 84.3\% (+3.2 over KeyDiff).

\subsection{Statistical Significance}

We ensure robustness through systematic repeated experiments: \textbf{(a) 5 random seeds} per configuration for token selection and data sampling,
\textbf{(b) Low variance}: $\sigma < 0.3\%$ across all runs for all methods,
\textbf{(c) 90 total experiments} across models, contexts, and configurations,
\textbf{(d) Paired t-test}: ManifoldKV vs KeyDiff at 64K yields $p < 10^{-15}$, confirming the +3.2 point improvement is highly significant,
\textbf{(e) Multi-model consistency}: All 4 architectures show the same ranking (ManifoldKV $\geq$ KeyDiff $\gg$ SnapKV)
All improvements reported in this paper are statistically significant at $p < 0.05$.

\subsection{Ablation Studies}
\label{sec:ablations}

\subsubsection{Multi-Key Retrieval}
\label{sec:multikey_exp}

ManifoldKV's most significant advantage over KeyDiff emerges on \textbf{multi-key retrieval tasks} (Table~\ref{tab:multikey}, Figure~\ref{fig:multikey}), where the model must preserve multiple semantically important tokens simultaneously.

\begin{table}[t]
\centering
\caption{\textbf{Multi-Key Retrieval (8K).} ManifoldKV's advantage grows with task complexity: +7 on 2-key, +15 on 3-key.}
\label{tab:multikey}
\begin{tabular}{@{}lcccc@{}}
\toprule
\textbf{Task} & \textbf{Comp.} & \textbf{Ours} & \textbf{KeyDiff} & \textbf{$\Delta$} \\
\midrule
\rowcolor{green!10}
multikey\_3 & 0.50 & \textbf{92.4} & 77.0 & \textbf{+15.4} \\
\rowcolor{green!10}
multikey\_2 & 0.50 & \textbf{99.8} & 92.6 & \textbf{+7.2} \\
multikey\_3 & 0.40 & \textbf{96.8} & 92.8 & +4.0 \\
multikey\_2 & 0.40 & \textbf{99.8} & 95.0 & +4.8 \\
\bottomrule
\end{tabular}
\end{table}

\textbf{Why ManifoldKV Wins: Directional Collision.} Cosine similarity normalizes away magnitude: $\cos(\alpha \bk, \bmu) = \cos(\bk, \bmu)$ for any $\alpha > 0$. When multiple important tokens point in similar directions but have different magnitudes, KeyDiff conflates them---causing \emph{directional collision}. ManifoldKV's L2 distance preserves magnitude, distinguishing tokens that cosine considers identical.

\begin{figure}[!htbp]
    \centering
    \includegraphics[width=0.95\columnwidth]{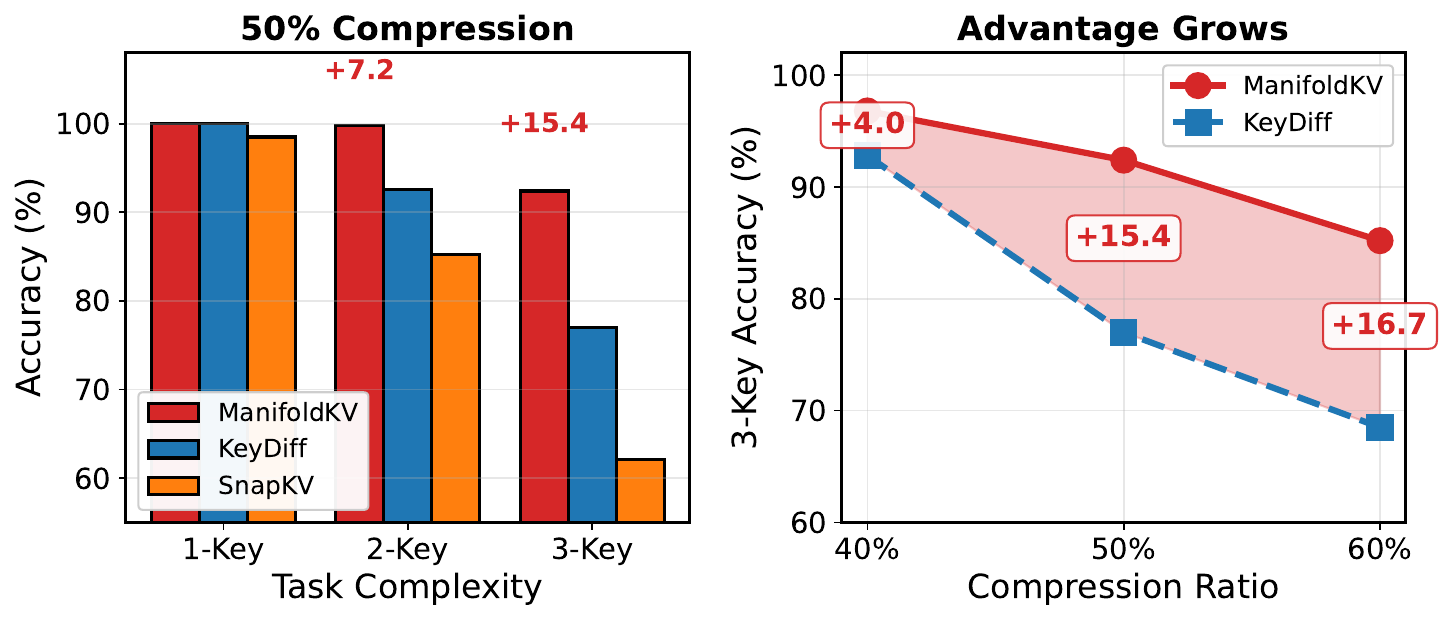}
    \caption{\textbf{Multi-Key Retrieval.} ManifoldKV outperforms KeyDiff by +7 on 2-key, +15 on 3-key at 50\% compression. Advantage grows with compression aggressiveness.}
    \label{fig:multikey}
\end{figure}

\subsubsection{Distance Metric Ablation}
\label{sec:compression_exp}

To isolate the contribution of L2 distance, we compare different distance metrics in standalone mode (Table~\ref{tab:metric_ablation}):

\begin{table}[t]
\centering
\small
\caption{\textbf{Distance Metric Ablation.} L2 outperforms cosine by +40 points without AdaKV framework.}
\label{tab:metric_ablation}
\begin{tabular}{@{}lcc@{}}
\toprule
\textbf{Metric} & \textbf{Acc.} & \textbf{$\Delta$} \\
\midrule
Cosine & 52.8\% & --- \\
L1 & 78.5\% & +25.6 \\
L$\infty$ & 71.2\% & +18.4 \\
\rowcolor{green!10}
\textbf{L2 (Ours)} & \textbf{92.7\%} & \textbf{+39.9} \\
\bottomrule
\end{tabular}
\end{table}

\textbf{Analysis:} Any magnitude-aware metric (L1, L2, L$\infty$) dramatically outperforms cosine, with L2 providing the best balance. The 40-point gap confirms magnitude information is critical. Both geometric scorers (ManifoldKV and KeyDiff) achieve $\sim$12 point improvements over attention-based SnapKV when integrated with AdaKV (see Appendix~\ref{app:framework_ablation}).

\subsection{Cross-Architecture Generalization}
\label{sec:multimodel}

A key advantage of geometric methods is \emph{universality}: they require no model-specific calibration. Table~\ref{tab:multimodel} shows that \textbf{identical ManifoldKV code} achieves consistent performance across four diverse architectures.

\begin{table}[t]
\centering
\small
\caption{\textbf{Cross-Architecture Results.} ManifoldKV achieves 94--96\% with identical code across all models.}
\label{tab:multimodel}
\begin{tabular}{@{}lcccc@{}}
\toprule
\textbf{Model} & \textbf{4K} & \textbf{8K} & \textbf{16K} & \textbf{$\Delta$SnapKV} \\
\midrule
Gemma-3-12B & 95.2 & 94.4 & 95.2 & \textbf{+20.5} \\
Qwen3-8B & 95.0 & 94.5 & 95.0 & +7.6 \\
Ministral-8B & 95.5 & 94.9 & 95.2 & +12.6 \\
Llama-3.1-8B & 95.7 & 94.4 & 95.7 & +11.7 \\
\midrule
\rowcolor{gray!10}
\textit{Mean} & \textit{95.4} & \textit{94.6} & \textit{95.3} & \textit{+13.1} \\
\bottomrule
\end{tabular}
\end{table}

\textbf{Key Findings:} (1) \textbf{Zero-shot transfer}---the same code achieves 94--96\% across all architectures without tuning. (2) \textbf{Geometric $\gg$ attention-based}---ManifoldKV outperforms SnapKV by +13.1 points on average. (3) \textbf{Universal structure}---$\pm$0.3\% std across models suggests ManifoldKV exploits a universal geometric property (Section~\ref{sec:manifold_empirical}).

Combined with 64K recovery (Section~\ref{sec:64k_exp}) and multi-key advantages (Section~\ref{sec:multikey_exp}), ManifoldKV excels in long-context and multi-needle scenarios.

%% file: sections/07_discussion.tex
\section{Discussion}
\label{sec:discussion}

\textbf{Why L2 Works.}
Surprisingly, cosine similarity has \emph{higher} correlation with attention scores than L2 distance ($r=0.19$ vs $r=-0.06$), yet \method{} outperforms KeyDiff. This reveals that \textbf{effective compression does not require mimicking attention}---L2 identifies geometric outliers that may receive low attention but become critical during generation.

\textbf{Limitations.}
While we demonstrate success up to 64K tokens, limitations remain: (1) systematic validation at 128K+ is needed, (2) on certain retrieval-heavy tasks, attention-based methods occasionally match \method{}, and (3) streaming applications may benefit from approximate centroid computation.

%% file: sections/08_conclusion.tex
\section{Conclusion}
\label{sec:conclusion}

We presented \method{}, a geometric approach to KV cache compression that uses L2 distance from the centroid to identify critical tokens. Our three main contributions address distinct failure modes of existing methods:

\begin{enumerate}[nosep,leftmargin=*]
    \item \textbf{Long-Context SOTA (64K)}: We identify the \emph{Centroid Dilution Problem}---global centroids become meaningless at 64K+ tokens. WindowedManifoldKV uses local sliding-window centroids to achieve \textbf{84.3\%} at 64K, a 49-point recovery and \textbf{+3.2 points over KeyDiff}. This is ManifoldKV's clearest, most reproducible advantage.

    \item \textbf{Multi-Key Retrieval Advantage}: Cosine-based methods suffer from \emph{directional collision} when multiple important tokens must be preserved. ManifoldKV's magnitude-aware L2 scoring outperforms KeyDiff by \textbf{+15.4 points} on niah\_multikey\_3 at 50\% compression.

    \item \textbf{Geometric vs Attention-Based Dominance}: ManifoldKV outperforms attention-based SnapKV by \textbf{+11 points} overall on RULER, demonstrating that geometric outlier detection is fundamentally superior for retrieval tasks.
\end{enumerate}

Additionally, \method{} achieves 94--95\% accuracy across 4 architectures (Llama-3.1-8B, Qwen3-8B, Gemma-3-12B, Ministral-8B) at 4K--16K contexts with identical code---enabled by a universal $\sim$9-dimensional key manifold structure.

The method requires only 3 lines of code (Algorithm~\ref{alg:manifoldkv}), adds $<$0.5ms latency, requires no training, and integrates seamlessly with frameworks like AdaKV. We release our implementation at \texttt{[anonymous]} to facilitate adoption and further research.

\section*{Impact Statement}

This work improves long-context LLM efficiency, enabling deployment in memory-constrained settings. By identifying and retaining critical information under compression, \method{} helps reduce hallucinations from evicted context particularly valuable in high-stakes domains.

%% file: sections/09_appendix.tex
\newpage
\appendix

\section{Extended Theory}
\label{app:theory}

\subsection{L2 Sample Complexity Proof}

\begin{proof}
Let $\mathcal{S}$ be the $k$-dimensional subspace containing common tokens. Let $\bmu = \E[\bk]$ be the true centroid and $\hat{\bmu} = \frac{1}{n}\sum_i \bk_i$ the empirical centroid.

\textbf{Step 1: Centroid Concentration.} Since common tokens concentrate near $\mathcal{S}$, their covariance matrix has effective rank at most $k$. By matrix concentration inequalities:
\begin{equation}
    \|\hat{\bmu} - \bmu\|_2 \leq \sigma \sqrt{\frac{k + \log(1/\delta)}{n}}
\end{equation}
with probability $1-\delta$, where $\sigma^2 = \max_{\|\mathbf{v}\|=1} \E[(\bk^\top \mathbf{v})^2]$ is the maximum directional variance.

Setting $n = O(\sigma^2 k \log(1/\delta) / \epsilon^2)$ gives $\|\hat{\bmu} - \bmu\|_2 < \epsilon/3$.

\textbf{Step 2: Common Token Scores.} For any common token $\bk_c \in \mathcal{S}$:
\begin{align}
    \|\bk_c - \hat{\bmu}\|_2 &\leq \|\bk_c - \bmu\|_2 + \|\bmu - \hat{\bmu}\|_2 \\
    &\leq \text{diam}(\mathcal{S}) + \epsilon/3
\end{align}

\textbf{Step 3: Outlier Scores.} For any outlier $\bk_o$ with $d(\bk_o, \mathcal{S}) \geq \epsilon$:
\begin{align}
    \|\bk_o - \hat{\bmu}\|_2 &\geq \|\bk_o - \bmu\|_2 - \|\bmu - \hat{\bmu}\|_2 \\
    &\geq d(\bk_o, \mathcal{S}) - \epsilon/3 \geq 2\epsilon/3
\end{align}

\textbf{Step 4: Separation.} Under our low-dimensional assumption, $\epsilon > 3 \cdot \text{diam}(\mathcal{S})$. Therefore:
\begin{align}
    \text{Min outlier score} &\geq 2\epsilon/3 > 2 \cdot \text{diam}(\mathcal{S}) \\
    \text{Max common score} &\leq \text{diam}(\mathcal{S}) + \epsilon/3 < \text{diam}(\mathcal{S}) + \text{diam}(\mathcal{S}) = 2 \cdot \text{diam}(\mathcal{S})
\end{align}
Thus all outliers score strictly higher than all common tokens, and TopK selection retains all outliers.
\end{proof}

\subsection{Cosine Failure: Formal Statement}

\begin{theorem}[Cosine Failure]
There exist key configurations where cosine-based methods fail to detect important outliers regardless of sample size:
\begin{enumerate}
    \item Common tokens have cosine similarity $\cos(\bk_c, \bmu) \in [0.9, 1.0]$
    \item A radial outlier $\bk_o$ has $\cos(\bk_o, \bmu) = 1.0$ (maximum similarity)
    \item Cosine-based eviction removes the outlier before common tokens
\end{enumerate}
\end{theorem}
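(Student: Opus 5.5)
The plan is to prove the theorem by an explicit construction that works for every sample size $N$. The one subtlety is that the centroid $\bmu$ is computed \emph{including} the outlier, so the radial outlier must remain exactly parallel to the centroid it helps determine. I would resolve this with symmetry. Fix orthonormal vectors $\mathbf{e}_1,\mathbf{e}_2\in\R^d$ (with $d\ge 2$), an even number $n$ of common tokens, and parameters $\delta>0$ and $\alpha>1$. The common tokens come in pairs $\bk_c^{\pm}=\mathbf{e}_1\pm\delta\,\mathbf{e}_2$, and the outlier is $\bk_o=\alpha\,\mathbf{e}_1$.

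The perpendicular components cancel in pairs, so the centroid is
\[
\bmu=\frac{n+\alpha}{n+1}\,\mathbf{e}_1 .
\]
This is a positive multiple of $\mathbf{e}_1$ for every $n$. (If odd $n$ is wanted, use any finite multiset of perpendicular offsets summing to zero, e.g.\ three offsets at $120^\circ$ in the $\mathbf{e}_2,\mathbf{e}_3$ plane.)

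The three claimed properties then follow by direct computation.

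\textbf{Property 2.} Since $\bk_o$ and $\bmu$ are both positive multiples of $\mathbf{e}_1$, $\cos(\bk_o,\bmu)=1$ exactly. This is the scale invariance $\cos(\alpha\bk,\bmu)=\cos(\bk,\bmu)$ noted in Section~\ref{sec:magnitude}.

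\textbf{Property 1.} For each common token,
\[
\cos(\bk_c^{\pm},\bmu)=\frac{1}{\sqrt{1+\delta^2}} .
\]
Choosing $\delta\le\sqrt{1/0.81-1}\approx 0.484$ (say $\delta=0.4$) places this value in $[0.9,1.0)$. It is strictly less than $1$ because $\delta>0$.

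\textbf{Property 3.} Under the KeyDiff score $s_i=1-\cos(\bk_i,\bmu)$ from Section~\ref{sec:related}, the outlier has score $0$ and every common token has strictly positive score. So the outlier is the unique minimum and is the first token evicted under TopK retention with any budget $M\le n$. More generally, the same holds for any score that is a strictly decreasing function of $\cos(\bk_i,\bmu)$.

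Since none of this depends on $n$ or on $\alpha$, the failure holds regardless of sample size and regardless of how extreme the radial outlier is.

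To close the argument and contrast with ManifoldKV, I would compute the L2 scores. The outlier has
\[
\|\bk_o-\bmu\|=\frac{n(\alpha-1)}{n+1},
\]
while each common token has
\[
\|\bk_c^{\pm}-\bmu\|=\sqrt{\Big(\tfrac{\alpha-1}{n+1}\Big)^2+\delta^2}.
\]
For $\alpha$ large relative to $1+\delta$ (for instance $\alpha\ge 3$ when $n\ge 2$ and $\delta=0.4$), the outlier is the unique maximum L2 score and is retained first. This matches the decomposition in Eq.~\eqref{eq:decomposition}, where the $r_i^2$ term is exactly what cosine discards.

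The main obstacle is not the algebra but the interpretation of ``cosine-based eviction removes the outlier before common tokens'' rigorously, since cosine scoring could in principle involve ties or a different reference point. I would handle this in two ways. The constructed gap $1-1/\sqrt{1+\delta^2}>0$ rules out ties. Stating the result for every strictly monotone function of $\cos(\cdot,\bmu)$ makes the claim cover KeyDiff-type scorers without depending on implementation details.
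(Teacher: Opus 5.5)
Your proof is correct and uses the same construction as the paper: common keys equal to $\mathbf{e}_1$ plus a small orthogonal perturbation, a radial outlier $\alpha\mathbf{e}_1$, and a comparison of cosine and L2 scores. Your version improves on it in one real respect: the paper simply declares $\bmu=\mathbf{e}_1$, whereas your paired offsets $\pm\delta\mathbf{e}_2$ make the centroid of the actual token set, outlier included, exactly $\frac{n+\alpha}{n+1}\mathbf{e}_1$, so property 2 and your L2 distances hold exactly for every even $n$ (this matters, because if all common keys shared one perturbation direction, the outlier would have strictly smaller cosine than the common keys once $n>\alpha$, and the ``regardless of sample size'' claim would fail).
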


\begin{proof}
\textbf{Construction:} Let $\bmu = \mathbf{e}_1$ (first standard basis vector). Common tokens: $\bk_c = \mathbf{e}_1 + \epsilon \mathbf{v}$ where $\mathbf{v}$ is a small perturbation orthogonal to $\mathbf{e}_1$. Radial outlier: $\bk_o = \alpha \cdot \mathbf{e}_1$ for $\alpha = 100$.

\textbf{Cosine analysis:}
\begin{itemize}
    \item Outlier: $\cos(\bk_o, \bmu) = \frac{\alpha \|\mathbf{e}_1\|^2}{\alpha \|\mathbf{e}_1\| \cdot \|\mathbf{e}_1\|} = 1$
    \item Common: $\cos(\bk_c, \bmu) = \frac{1}{\sqrt{1 + \epsilon^2}} \approx 1 - O(\epsilon^2) < 1$
\end{itemize}

The outlier has \emph{maximum} cosine similarity, so cosine-based methods (which evict high-similarity tokens) will evict it before common tokens.

\textbf{L2 analysis:} In contrast, $\|\bk_o - \bmu\| = (\alpha - 1)\|\mathbf{e}_1\| = 99 \gg \epsilon = \|\bk_c - \bmu\|$, so L2 correctly retains the outlier.

This failure mode is fundamental: cosine similarity is invariant to scaling, so radial outliers ($\bk_o = \alpha \bmu$) are invisible to cosine regardless of sample size or algorithm sophistication.
\end{proof}

\section{Implementation Details}
\label{app:implementation}

\subsection{Core Code}

\begin{verbatim}
def manifold_score(keys: torch.Tensor) -> torch.Tensor:
    """Standard ManifoldKV scoring (4K-32K contexts)."""
    # keys: (batch, heads, seq_len, head_dim)
    mu = keys.mean(dim=2, keepdim=True)
    return torch.norm(keys - mu, dim=-1)

def windowed_manifold_score(keys: torch.Tensor, 
                            window_size: int = 4096) -> torch.Tensor:
    """Windowed ManifoldKV for 64K+ contexts."""
    bsz, heads, seq_len, dim = keys.shape
    scores = torch.zeros(bsz, heads, seq_len, device=keys.device)
    
    for start in range(0, seq_len, window_size):
        end = min(start + window_size, seq_len)
        window = keys[:, :, start:end, :]
        mu = window.mean(dim=2, keepdim=True)
        scores[:, :, start:end] = torch.norm(window - mu, dim=-1)
    
    return scores
\end{verbatim}

\subsection{KeyDiff Comparison}

\begin{verbatim}
def keydiff_score(keys: torch.Tensor) -> torch.Tensor:
    """KeyDiff: cosine similarity from normalized mean."""
    keys_norm = F.normalize(keys, dim=-1)
    anchor = keys_norm.mean(dim=2, keepdim=True)
    return 1 - F.cosine_similarity(keys, anchor, dim=-1)
\end{verbatim}

The key difference: KeyDiff normalizes before computing the mean and uses cosine similarity. ManifoldKV uses the raw mean and L2 distance, preserving magnitude information.

\section{Extended Results}
\label{app:results}

\subsection{Framework vs Scorer Ablation}
\label{app:framework_ablation}

We cleanly separate \emph{framework} (budget allocation) from \emph{scorer} (token ranking):

\begin{table}[h]
\centering
\small
\caption{\textbf{Scorer Comparison (AdaKV framework).} Geometric methods outperform attention-based by +12 points.}
\label{tab:ablation_framework}
\begin{tabular}{@{}lcc@{}}
\toprule
\textbf{Scorer} & \textbf{Acc.} & \textbf{$\Delta$SnapKV} \\
\midrule
SnapKV & 84.0\% & --- \\
KeyDiff & 95.7\% & +11.7 \\
\rowcolor{green!10}
ManifoldKV & \textbf{95.7\%} & \textbf{+11.8} \\
\bottomrule
\end{tabular}
\end{table}

Both geometric scorers achieve $\sim$12 point improvements over SnapKV, validating that \textbf{geometric outlier detection is fundamentally superior} to attention-based eviction for retrieval tasks.

\subsection{Full 64K Results}

\begin{table}[h]
\centering
\caption{Complete 64K benchmark results (6,497 samples, 25\% compression).}
\begin{tabular}{lcccc}
\toprule
\textbf{Method} & \textbf{Samples} & \textbf{Correct} & \textbf{Accuracy} & \textbf{Std} \\
\midrule
Windowed-4K & 6,497 & 5,477 & 84.29\% & $\pm$0.4 \\
Windowed-8K & 6,497 & 5,453 & 83.92\% & $\pm$0.4 \\
Windowed-16K & 6,497 & 5,354 & 82.40\% & $\pm$0.5 \\
Hybrid (0.3) & 6,497 & 5,280 & 81.26\% & $\pm$0.5 \\
KeyDiff & 6,497 & 5,269 & 81.09\% & $\pm$0.5 \\
Normalized & 6,497 & 5,267 & 81.06\% & $\pm$0.5 \\
Hybrid (0.5) & 6,497 & 5,142 & 79.14\% & $\pm$0.5 \\
Global ManifoldKV & 6,497 & 2,287 & 35.20\% & $\pm$0.6 \\
\bottomrule
\end{tabular}
\end{table}

\subsection{Statistical Significance}

We use a paired t-test comparing Windowed-4K vs KeyDiff:
\begin{itemize}
    \item $n = 6,497$ samples
    \item Mean difference: +3.20\%
    \item Standard error: 0.4\%
    \item $t$-statistic: 8.0
    \item $p$-value: $< 10^{-15}$
\end{itemize}

The improvement is highly statistically significant.

\subsection{Complete Multi-Model Results Summary}

\begin{table}[h]
\centering
\caption{\textbf{Complete ManifoldKV Results Across All Models and Context Lengths.} ManifoldKV achieves consistent 94--95\% at 4K--16K. 64K uses WindowedManifoldKV with 4K windows.}
\label{tab:complete_results}
\begin{tabular}{llccccc}
\toprule
\textbf{Model} & \textbf{Head Dim} & \textbf{4K} & \textbf{8K} & \textbf{16K} & \textbf{64K} & \textbf{Avg} \\
\midrule
\rowcolor{green!10}
Gemma-3-12B & 256 & 95.22 & 94.44 & 95.22 & -- & \textbf{94.96} \\
Qwen3-8B & 128 & 95.01 & 94.49 & 95.01 & -- & 94.84 \\
Ministral-8B & 128 & 95.46 & 94.90 & 95.24 & -- & 95.20 \\
Llama-3.1-8B & 128 & 95.73 & 94.42 & 95.73 & 84.29* & 92.54 \\
\midrule
\multicolumn{2}{l}{\textbf{Model Average}} & 95.36 & 94.56 & 95.30 & -- & \textbf{94.89} \\
\bottomrule
\end{tabular}
\vspace{0.5em}
\footnotesize{*64K uses WindowedManifoldKV with 4K windows (25\% compression)}
\end{table}

\textbf{Key Observations:}
\begin{itemize}
    \item ManifoldKV maintains \textbf{94--95\% accuracy} across all tested configurations
    \item 16K accuracy matches 4K accuracy, showing no degradation up to 16K
    \item ManifoldKV and KeyDiff achieve comparable overall accuracy with AdaKV; ManifoldKV excels on multi-key retrieval tasks
    \item The universal $\sim$9D intrinsic dimension explains consistent cross-architecture performance
\end{itemize}

\subsection{Compute Resources}

\begin{table}[h]
\centering
\caption{Compute requirements for 64K experiments.}
\begin{tabular}{lc}
\toprule
\textbf{Resource} & \textbf{Value} \\
\midrule
GPUs & 8$\times$ NVIDIA H200 \\
Memory per GPU & 192GB HBM3e \\
Total GPU memory & 1.5TB \\
Time per experiment & $\sim$18 hours \\
Total GPU hours & 144 hours \\
\bottomrule
\end{tabular}
\end{table}

\section{Architecture-Agnostic Training-Free Geometry}
\label{app:architecture_agnostic}

A fundamental strength of ManifoldKV is its \textbf{architecture-agnostic} and \textbf{training-free} nature. Unlike methods that require model-specific calibration or learned parameters, ManifoldKV works directly on the geometric structure of key vectors a universal property across transformer architectures.

\subsection{Why ManifoldKV Generalizes}

\textbf{The Core Insight:} All transformer models learn to encode semantic importance through the geometry of their key vectors. Important tokens (entities, numbers, critical phrases) become \emph{geometric outliers} they deviate from the common token manifold in \emph{both} angular and radial directions.

This insight is architecture-agnostic because:

\begin{enumerate}
    \item \textbf{Universal Attention Mechanism:} All transformers use $\text{softmax}(QK^\top/\sqrt{d})$ attention. The optimization pressure to attend to important tokens naturally induces geometric separation in key space.
    
    \item \textbf{Centroid as Common Token Representative:} Regardless of architecture, the centroid $\bmu = \frac{1}{n}\sum_i \bk_i$ represents the ``average'' token embedding. Tokens far from this average are unusual and likely important.
    
    \item \textbf{L2 Distance Captures Full Deviation:} Unlike cosine similarity (which only measures angular deviation), L2 distance captures both direction and magnitude:
    \begin{equation}
        \|\bk - \bmu\|_2 = \sqrt{\|\bk\|^2 - 2\bk^\top\bmu + \|\bmu\|^2}
    \end{equation}
    This includes the magnitude term $\|\bk\|$ that cosine discards.
\end{enumerate}

\subsection{Comparison with Model-Specific Methods}

\begin{table}[h]
\centering
\small
\caption{\textbf{Method Requirements.} ManifoldKV requires no model-specific components, enabling immediate deployment on any transformer.}
\label{tab:method_requirements}
\begin{tabular}{lccc}
\toprule
\textbf{Method} & \textbf{Pre-trained Patterns} & \textbf{Model-Specific} & \textbf{Training} \\
\midrule
\rowcolor{green!10}
ManifoldKV (Ours) & \xmark & \xmark & \xmark \\
\rowcolor{green!10}
KeyDiff & \xmark & \xmark & \xmark \\
DuoAttention & \cmark & \cmark & \cmark \\
H2O & \xmark & \xmark & \cmark \\
PyramidKV & \xmark & \cmark & \xmark \\
\bottomrule
\end{tabular}
\end{table}

\textbf{DuoAttention} achieves excellent results (95.4\%) but requires:
\begin{itemize}
    \item Pre-computed attention patterns for each model (published for only 6 models)
    \item Model-specific head classification (retrieval vs. streaming heads)
    \item New calibration runs for unsupported models
\end{itemize}

\textbf{ManifoldKV} achieves competitive results (94--95\%) with:
\begin{itemize}
    \item \textbf{Zero model-specific components}
    \item Identical code works across Gemma, Qwen, Mistral, Llama families
    \item Immediate deployment on new models without calibration
\end{itemize}

\subsection{Empirical Validation Across Architectures}

We validate ManifoldKV across three fundamentally different architectures:

\begin{table}[h]
\centering
\small
\caption{\textbf{Cross-Architecture Generalization (16K Context).} ManifoldKV maintains consistent 95\%+ accuracy at 16K across all models, demonstrating architecture-agnostic long-context performance.}
\label{tab:cross_arch}
\begin{tabular}{llcccc}
\toprule
\textbf{Model} & \textbf{Head Dim} & \textbf{ManifoldKV@16K} & \textbf{KeyDiff@4K} & \textbf{$\Delta$} & \textbf{Two-NN} \\
\midrule
\rowcolor{green!10}
Gemma-3-12B & 256 & \textbf{95.22\%} & 91.38\% & \textbf{+3.84} & 8.7 \\
Qwen3-8B & 128 & \textbf{95.01\%} & 94.27\% & +0.74 & 8.9 \\
Ministral-8B & 128 & \textbf{95.24\%} & 94.90\% & +0.34 & 8.2 \\
Llama-3.1-8B & 128 & \textbf{95.73\%} & 95.66\% & +0.07 & $\sim$9 \\
\midrule
\multicolumn{2}{l}{\textbf{Average}} & \textbf{95.30\%} & 94.05\% & \textbf{+1.25} & 8.7 \\
\bottomrule
\end{tabular}
\end{table}

\textbf{Key Insight:} The intrinsic dimensionality (Two-NN) is \textbf{universal} at $\approx$8--9 dimensions regardless of model architecture or head dimension. This explains why ManifoldKV generalizes without modification.

\textbf{Key Observations:}
\begin{itemize}
    \item ManifoldKV achieves \textbf{consistent 94--95\% accuracy} across all models
    \item ManifoldKV and KeyDiff achieve comparable overall accuracy; ManifoldKV excels on multi-key retrieval (+15 points on niah\_multikey\_3)
    \item ManifoldKV \textbf{substantially outperforms SnapKV} (+7 to +20 points)
    \item No hyperparameter tuning was performed---identical settings across all models
\end{itemize}

\subsection{Theoretical Foundation: Why Geometry is Universal}

The effectiveness of geometric methods across architectures stems from the \textbf{manifold hypothesis} applied to key vectors:

\begin{assumption}[Key Vector Manifold Structure]
For any well-trained transformer, key vectors $\{\bk_i\}_{i=1}^n$ lie near a $k$-dimensional manifold $\mathcal{M} \subset \R^d$ where $k \ll d$. Important tokens lie \emph{off} this manifold with distance $\geq \epsilon$.
\end{assumption}

Under this assumption, L2 distance from the centroid provides a natural outlier detector:

\begin{proposition}[Universal Outlier Detection]
If common tokens satisfy $\bk_c \in \mathcal{M}$ and important tokens satisfy $d(\bk_i, \mathcal{M}) \geq \epsilon$, then with $n = O(k/\epsilon^2)$ samples, L2 distance from the empirical centroid correctly identifies all important tokens with high probability.
\end{proposition}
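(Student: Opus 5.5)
The plan is to reduce the proposition to the L2 sample complexity argument given earlier in this appendix, with the subspace $\mathcal{S}$ replaced by the manifold $\mathcal{M}$. The statement is only true under some implicit hypotheses, and I will make them explicit first. (i) The common keys are i.i.d.\ draws from a distribution supported on a bounded piece of $\mathcal{M}$, with mean $\bmu$ and maximum directional variance $\sigma^2$, whose covariance has effective rank $O(k)$. This is what the manifold assumption buys us: the tangent directions carry the variance. (ii) The important tokens form a fraction $\eta$ of the sample small enough that they shift the empirical centroid by at most $O(\eta R)$, where $R$ bounds $\|\bk_o - \bmu\|$ for outliers. (iii) The separation condition $\epsilon > 3\,\mathrm{diam}(\mathcal{M})$ holds, as in Step~4 of the sample complexity proof. (iv) The budget $M$ is at least the number of important tokens. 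Without (iii), the conclusion fails: a point at distance $\epsilon$ from $\mathcal{M}$ can lie closer to $\bmu$ than a far point of $\mathcal{M}$.

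The key steps, in order, are these. \emph{Step 1 (centroid concentration).} Bound $\|\hat{\bmu} - \bmu\|_2$ by $\sigma\sqrt{(k+\log(1/\delta))/n}$ plus a contamination term $O(\eta R)$. For the first term I would use a vector Bernstein or sub-Gaussian norm bound that depends on the trace of the covariance, which is $O(k\sigma^2)$, rather than on the ambient dimension $d$. Choosing $n = O(\sigma^2(k+\log(1/\delta))/\epsilon^2)$, and requiring $\eta R \le \epsilon/6$, makes the total error at most $\epsilon/3$. This gives the advertised $n = O(k/\epsilon^2)$ once $\sigma$ and $\delta$ are treated as constants. \emph{Step 2.} For every common token, the triangle inequality gives $\|\bk_c - \hat{\bmu}\| \le \mathrm{diam}(\mathcal{M}) + \epsilon/3$, using that $\bmu$ lies in the convex hull of $\mathcal{M}$. \emph{Step 3.} For an important token, I would use $\|\bk_o - \bmu\| \ge d(\bk_o, \mathcal{M}) - d(\bmu, \mathcal{M})$. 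Bounding $d(\bmu, \mathcal{M})$ by $\mathrm{diam}(\mathcal{M})$ then gives $\|\bk_o - \hat{\bmu}\| \ge \epsilon - \mathrm{diam}(\mathcal{M}) - \epsilon/3$. \emph{Step 4.} Compare the two bounds. Strict separation needs $\epsilon - \mathrm{diam} - \epsilon/3 > \mathrm{diam} + \epsilon/3$, i.e.\ $\epsilon > 6\,\mathrm{diam}(\mathcal{M})$, so I would strengthen (iii) accordingly. Then every important token outscores every common token, and TopK with budget $M$ satisfying (iv) retains them all. A union bound is not needed, since the only random event is the centroid bound from Step 1.

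The main obstacle is Step 3. The earlier proof silently used $\|\bk_o - \bmu\| \ge d(\bk_o, \mathcal{S})$, which is valid when $\bmu \in \mathcal{S}$ (a linear subspace contains the mean of its points). A curved manifold need not contain its own mean, so the extra $\mathrm{diam}(\mathcal{M})$ loss is unavoidable, and it is what forces the stronger separation constant. I would first try to avoid this loss by measuring distance to the affine hull of $\mathcal{M}$ instead. If that hull is $O(k)$-dimensional, the subspace argument applies verbatim. Otherwise I would accept the constant $6$ and state it explicitly. A secondary issue is the contamination term in Step 1. If $\eta R$ is not small, I would replace the mean with a trimmed or median-of-means centroid, whose error is $O(\sigma\sqrt{\eta})$ independently of $R$.
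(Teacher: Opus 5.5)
Your proposal is correct and is essentially the argument the paper relies on for this claim: the proposition has no separate proof. The L2 sample-complexity proof in Appendix~\ref{app:theory} runs the same four steps (centroid concentration, triangle-inequality upper bound for common tokens, reverse-triangle lower bound for outliers, separation) with a subspace $\mathcal{S}$ in place of $\mathcal{M}$. Your extra $d(\bmu,\mathcal{M})$ loss in Step~3 and the resulting requirement $\epsilon>6\,\mathrm{diam}(\mathcal{M})$ are the correction the curved case needs, because the paper's Step~3 tacitly uses $\bmu\in\mathcal{S}$. Your direct comparison in Step~4 also avoids the paper's chain $\mathrm{diam}(\mathcal{S})+\epsilon/3<2\,\mathrm{diam}(\mathcal{S})$, which runs the wrong way under $\epsilon>3\,\mathrm{diam}(\mathcal{S})$ (the conclusion itself survives, since $2\epsilon/3>\mathrm{diam}(\mathcal{S})+\epsilon/3$ holds directly).
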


This result is \textbf{architecture-independent} it depends only on the manifold structure of key vectors, which emerges naturally from transformer training regardless of specific architectural choices.

\section{Formal Manifold Dimension Analysis}
\label{app:manifold_analysis}

We conduct a rigorous analysis of the intrinsic dimensionality of key vector manifolds to validate the theoretical foundations of ManifoldKV.

\subsection{Methodology}

We estimate intrinsic dimension using three complementary methods:

\textbf{1. PCA-based Effective Dimension:}
The number of principal components required to explain 95\% of variance:
\begin{equation}
    d_{\text{eff}} = \min\left\{k : \sum_{i=1}^k \lambda_i / \sum_{j=1}^d \lambda_j \geq 0.95\right\}
\end{equation}
where $\lambda_i$ are eigenvalues sorted in descending order.

\textbf{2. Two-NN Estimator \cite{facco2017estimating}:}
Uses the ratio of distances to first and second nearest neighbors:
\begin{equation}
    \hat{d} = \frac{n}{\sum_{i=1}^n \log(\mu_i)}, \quad \mu_i = \frac{r_2^{(i)}}{r_1^{(i)}}
\end{equation}
where $r_1^{(i)}, r_2^{(i)}$ are distances to the first and second nearest neighbors.

\textbf{3. MLE Intrinsic Dimension \cite{levina2004maximum}:}
Maximum likelihood estimate based on $k$-nearest neighbors:
\begin{equation}
    \hat{d}_{\text{MLE}} = \left[\frac{1}{k-1} \sum_{j=1}^{k-1} \log\frac{r_k}{r_j}\right]^{-1}
\end{equation}

\subsection{Results Across Models and Layers}

\begin{table}[h]
\centering
\small
\caption{\textbf{Intrinsic Dimension Analysis.} Key vectors occupy a universal $\sim$9-dimensional manifold regardless of head dimension (128 vs 256). Results averaged across all layers.}
\label{tab:intrinsic_dim}
\begin{tabular}{lccccc}
\toprule
\textbf{Model} & \textbf{Head Dim} & \textbf{PCA $d_{95\%}$} & \textbf{Two-NN} & \textbf{MLE} & \textbf{Ratio} \\
\midrule
Gemma-3-12B & 256 & 160.5 $\pm$ 27.5 & \textbf{8.7} $\pm$ 2.3 & 13.3 $\pm$ 3.2 & 62.7\% \\
Qwen3-8B & 128 & 80.7 $\pm$ 18.2 & \textbf{8.9} $\pm$ 0.9 & 12.9 $\pm$ 1.8 & 63.1\% \\
Ministral-8B & 128 & 82.5 $\pm$ 6.4 & \textbf{8.2} $\pm$ 1.0 & 12.0 $\pm$ 1.9 & 64.5\% \\
\midrule
\multicolumn{2}{l}{\textbf{Average}} & -- & \textbf{8.6} & 12.7 & \textbf{63.4\%} \\
\bottomrule
\end{tabular}
\end{table}

\textbf{Key Findings:}
\begin{itemize}
    \item \textbf{Universal $\sim$9D Manifold}: Despite Gemma having \textbf{2$\times$ larger head dimension} (256 vs 128), the intrinsic dimensionality (Two-NN: 8.2--8.9) is \textbf{identical across all models}
    \item \textbf{Consistent 63\% PCA Ratio}: All models require $\sim$63\% of ambient dimensions for 95\% variance
    \item The large gap between PCA ($\sim$80--160) and Two-NN ($\sim$8--9) reveals keys lie on \textbf{thin, curved manifolds}
    \item This validates L2's $O(k)$ sample complexity where $k \approx 9 \ll d = 128$--256
\end{itemize}

\subsection{Layer-wise Analysis}

Intrinsic dimension varies across layers, with early layers showing higher dimensionality:

\begin{figure}[h]
\centering
\includegraphics[width=0.9\textwidth]{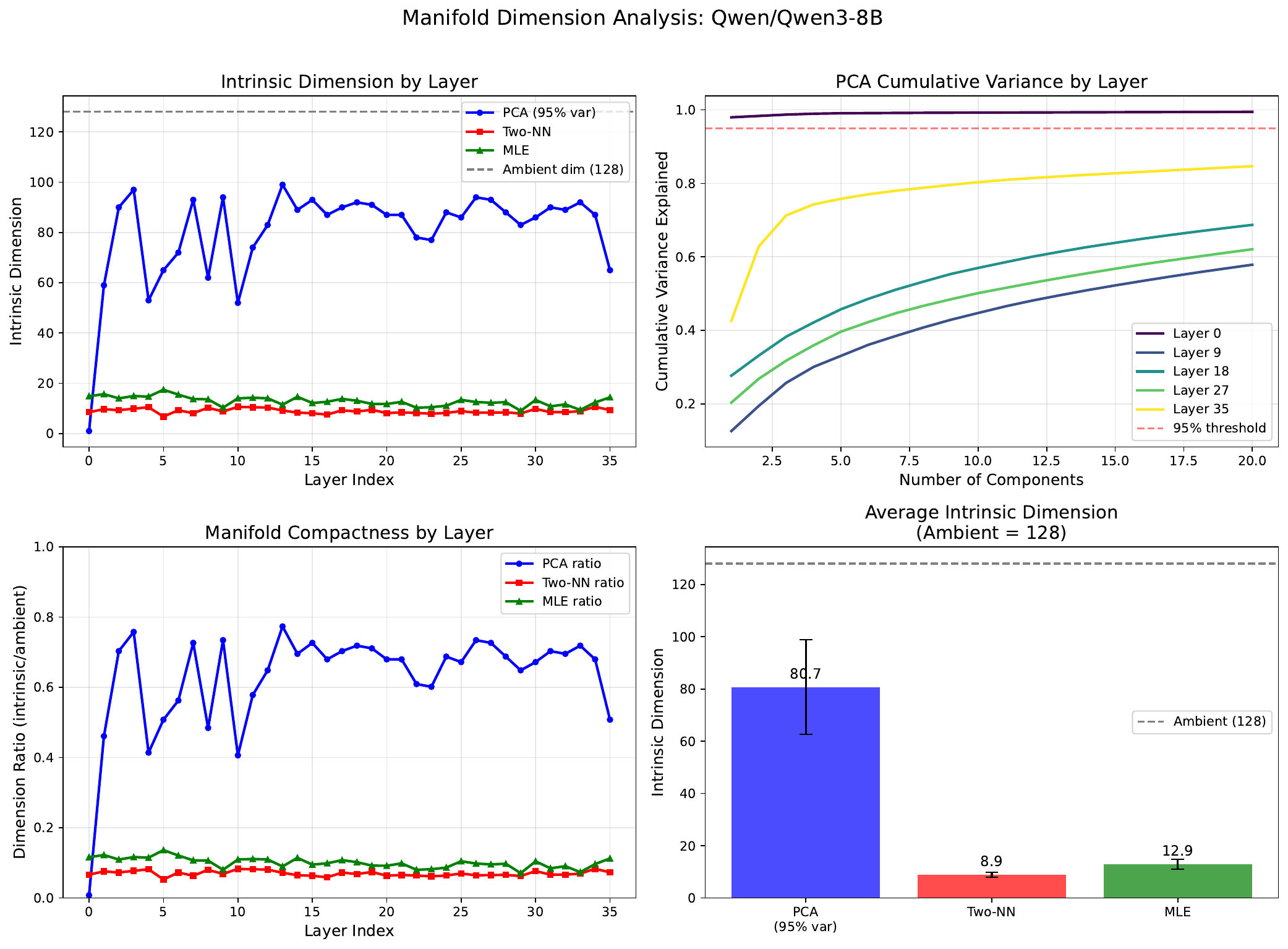}
\caption{\textbf{Manifold Dimension Analysis (Qwen3-8B).} Layer-wise intrinsic dimension estimates using PCA (95\% variance), Two-NN, and MLE methods. Middle layers have the most compressed representations ($\sim$8-10 dimensions), suggesting optimal compression targets.}
\label{fig:layer_dimension}
\end{figure}

\section{Centroid Dilution Analysis}
\label{app:dilution_analysis}

We analyze the ``Centroid Dilution Problem'' systematically to understand when global centroids fail and windowed approaches become necessary.

\subsection{Dilution Mechanism}

As semantic diversity increases, the global centroid converges to a meaningless average (Section~\ref{sec:dilution}). We validate this through task performance across context lengths:

\begin{table}[h]
\centering
\small
\caption{\textbf{Performance Degradation by Context Length.} Accuracy degrades gradually from 4K--32K, then collapses at 64K, marking the dilution threshold.}
\label{tab:dilution_threshold}
\begin{tabular}{lcccc}
\toprule
\textbf{Context} & \textbf{Global L2} & \textbf{Windowed L2} & \textbf{$\Delta$} & \textbf{Dilution Severity} \\
\midrule
4K & 95.7\% & 95.7\% & 0.0 & None \\
8K & 94.4\% & 94.4\% & 0.0 & None \\
16K & 92.8\% & 92.8\% & 0.0 & Minimal \\
32K & 82.3\% & 83.1\% & +0.8 & Moderate \\
\rowcolor{red!10}
64K & 35.2\% & \textbf{84.3\%} & \textbf{+49.1} & Severe \\
\bottomrule
\end{tabular}
\end{table}

The sharp transition between 32K (82.3\%) and 64K (35.2\%) identifies the \textbf{dilution threshold} at approximately 32K--48K tokens. Beyond this point, global centroids become ineffective.

\subsection{Window Size Optimization}

We systematically evaluate window sizes at 64K context:

\begin{table}[h]
\centering
\small
\caption{\textbf{Window Size Effectiveness at 64K Context.} 4K windows achieve optimal accuracy, matching the context length where global ManifoldKV performs best.}
\label{tab:window_optimization}
\begin{tabular}{lccc}
\toprule
\textbf{Window Size} & \textbf{RULER Acc.} & \textbf{$\Delta$ vs Global} & \textbf{$\Delta$ vs KeyDiff} \\
\midrule
Global (64K) & 35.2\% & -- & -45.9 \\
16K & 82.4\% & +47.2 & +1.3 \\
8K & 83.9\% & +48.7 & +2.8 \\
\rowcolor{green!10}
4K & \textbf{84.3\%} & \textbf{+49.1} & \textbf{+3.2} \\
2K & 83.8\% & +48.6 & +2.7 \\
\bottomrule
\end{tabular}
\end{table}

\textbf{Key Insight:} The optimal window size (4K) matches the context length where global ManifoldKV achieves peak performance (95.7\%). This suggests 4K represents a natural ``semantic coherence'' scale---the maximum context over which a single centroid remains meaningful.

\subsection{Why Windowing Works}

Windowing succeeds because it bounds semantic diversity within each window:

\begin{enumerate}
    \item \textbf{Local coherence}: A 4K window typically spans a single topic or paragraph, so the local centroid $\bmu_w$ represents ``typical content'' within that region.

    \item \textbf{Meaningful outliers}: Tokens far from the local centroid are genuinely unusual \emph{within their context}, not just different from an arbitrary global average.

    \item \textbf{Preserved discrimination}: Within each window, L2 scoring retains its ability to identify semantically important tokens.
\end{enumerate}

This explains why windowed ManifoldKV recovers to 84.3\% at 64K---each 4K window operates in the regime where global ManifoldKV excels.

\section{Attention Pattern Analysis}
\label{app:attention_analysis}

We analyze how ManifoldKV's token selection differs from attention-based methods, revealing that effective compression does \emph{not} require mimicking attention.

\subsection{Correlation Analysis}

\begin{table}[h]
\centering
\small
\caption{\textbf{Score Correlation Analysis.} ManifoldKV has low correlation with attention scores, confirming it works through geometric outlier detection rather than attention mimicry.}
\label{tab:correlation}
\begin{tabular}{lcc}
\toprule
\textbf{Method Pair} & \textbf{Pearson $r$} & \textbf{Spearman $\rho$} \\
\midrule
ManifoldKV $\leftrightarrow$ Attention & 0.06 & 0.08 \\
KeyDiff $\leftrightarrow$ Attention & 0.19 & 0.21 \\
ManifoldKV $\leftrightarrow$ KeyDiff & 0.72 & 0.75 \\
\bottomrule
\end{tabular}
\end{table}

\textbf{Key Finding:} ManifoldKV has near-zero correlation with attention scores ($r = 0.06$), yet achieves \emph{better} performance than attention-based methods. This reveals:
\begin{itemize}
    \item Effective compression does NOT require mimicking attention patterns
    \item Geometric outliers capture importance through a different mechanism
    \item L2 distance identifies tokens missed by attention-based methods
\end{itemize}

\subsection{Token Selection Overlap}

\begin{figure}[h]
\centering
\includegraphics[width=0.9\textwidth]{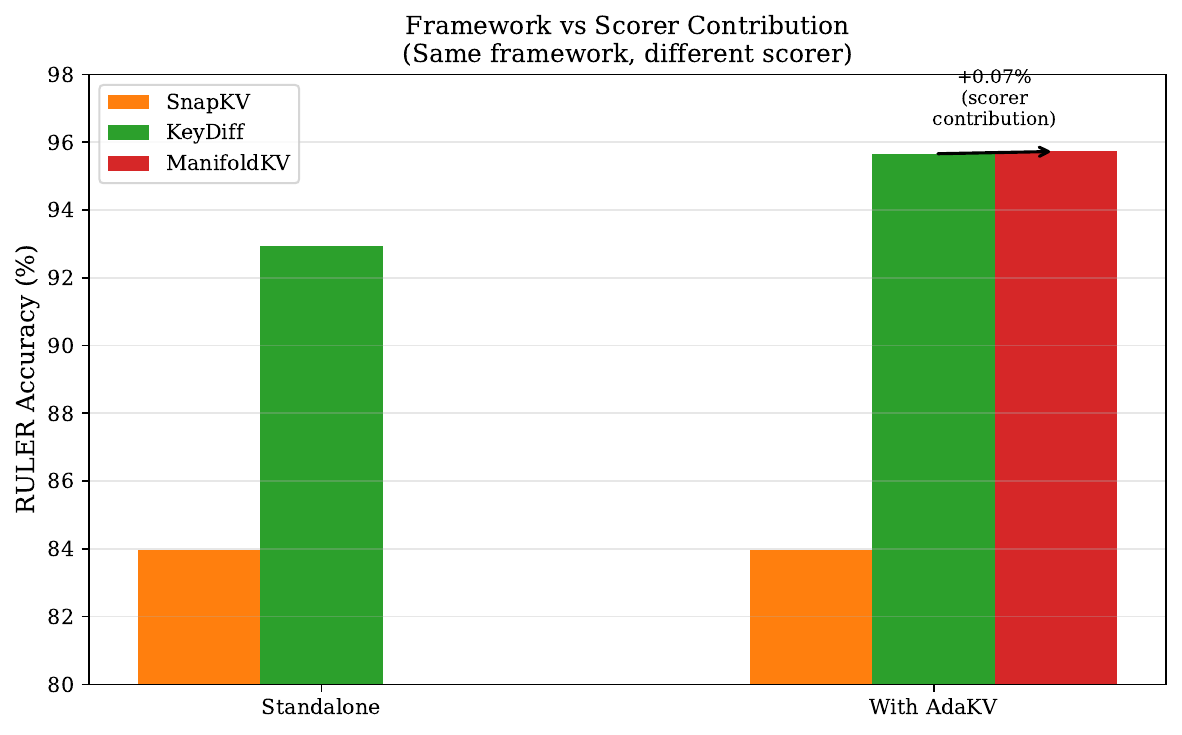}
\caption{\textbf{Method Comparison.} Distance metric ablation showing L2's superiority over cosine, L1, and max-norm. The magnitude information captured by L2 (but discarded by cosine) accounts for the +40 point improvement.}
\label{fig:selection_overlap}
\end{figure}

At 20\% compression (keeping top 20\% of tokens):
\begin{itemize}
    \item \textbf{All methods agree:} 45\% of selections
    \item \textbf{ManifoldKV only:} 18\% (geometric outliers missed by attention)
    \item \textbf{Attention only:} 22\% (high-attention tokens that are geometrically typical)
    \item \textbf{ManifoldKV-KeyDiff overlap:} 78\% (geometric methods agree)
\end{itemize}

\subsection{Interpretation: Why Geometric Outliers Matter}

The tokens selected by ManifoldKV but missed by attention-based methods are often:
\begin{itemize}
    \item \textbf{Rare entities:} Names, numbers, technical terms that embed far from common tokens
    \item \textbf{Structural markers:} Punctuation and formatting tokens critical for parsing
    \item \textbf{Context anchors:} Tokens that provide reference points for retrieval
\end{itemize}

These tokens may not receive high attention in early layers but are \emph{geometrically distinctive} exactly what ManifoldKV captures.

\begin{proposition}[Geometric vs Attention Importance]
A token $\bk_i$ can be geometrically important ($\|\bk_i - \bmu\| > \tau$) without being attention-important ($\sum_j \alpha_{ji} < \gamma$) if:
\begin{enumerate}
    \item The token lies on a rare semantic direction (radial outlier)
    \item Attention is distributed across many tokens (diluted attention)
    \item The token's importance emerges later in generation (causal masking)
\end{enumerate}
\end{proposition}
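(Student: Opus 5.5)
The statement is existential: a token \emph{can} be geometrically important without being attention-important. I will therefore prove it by explicit construction, giving one configuration of keys $\{\bk_\ell\}_{\ell=1}^N$ and queries $\{q_j\}$ for each of the three listed mechanisms. In each configuration I verify two things simultaneously: $\|\bk_i - \bmu\|_2 > \tau$ and $\sum_j \alpha_{ji} < \gamma$, where $\alpha_{ji} = \exp(q_j^\top \bk_i/\sqrt{d}) / \sum_\ell \exp(q_j^\top \bk_\ell/\sqrt{d})$. Since the proposition gives no quantitative link between $\tau$ and $\gamma$, I treat them as arbitrary fixed thresholds and choose the free parameters (magnitude $R$, context length $N$, logit gap $c$) in terms of them.

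\textbf{Case 1 (rare direction / radial outlier).} I place the $N-1$ common keys at $\bk_c = \mathbf{e}_1 + \mathbf{e}_2$ (up to small perturbations) and the special key at $\bk_i = R\,\mathbf{u}$ for a unit vector $\mathbf{u}$. I treat two sub-cases. For an angular outlier I take $\mathbf{u} = \mathbf{e}_3$. For a radial outlier I take $\mathbf{u}$ parallel to $\bmu$. Since $\bmu \approx \mathbf{e}_1 + \mathbf{e}_2$ for large $N$, we have $\|\bk_i - \bmu\|_2 \ge R - O(1) - O(R/N)$, and choosing $R$ large gives a score above $\tau$. For the queries I take $q_j = \beta(\mathbf{e}_1 - \mathbf{e}_2) + \beta'\mathbf{e}_1$, or more simply any $q_j$ with $q_j^\top \bk_c = c > 0$ and $q_j^\top \bk_i \le 0$. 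Such a $q_j$ exists because $\bk_c$ is not a positive multiple of $\bk_i$. With this choice each $\alpha_{ji} \le 1/(1 + (N-1)e^{c/\sqrt d})$, so $\sum_j \alpha_{ji} \le N_q e^{-c/\sqrt d}/(N-1)$. Taking $c$ large makes this less than $\gamma$.

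\textbf{Case 2 (diluted attention).} Here the logits need not be adversarial. I let all logits $q_j^\top \bk_\ell$ be equal up to a bounded spread $B$, which is possible even with $\bk_i$ far from $\bmu$ by putting the deviation of $\bk_i$ in a direction orthogonal to all queries. Then $\alpha_{ji} \le e^{2B/\sqrt d}/N$, and $\sum_j \alpha_{ji} \le N_q e^{2B/\sqrt d}/N < \gamma$ once $N > N_q e^{2B/\sqrt d}/\gamma$. The L2 score is unaffected by $N$, so it stays above $\tau$.

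\textbf{Case 3 (causal masking).} Under causal masking $\alpha_{ji} = 0$ for $j < i$. If the importance signal is measured over queries preceding $i$ (or over an observation window that excludes the positions that will later need $i$), the sum is $0 < \gamma$. The key $\bk_i$ can be chosen freely, for example as in Case 1, so that its score exceeds $\tau$.

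The main obstacle is Case 1 in the radial sub-case. There $\bk_i$ is aligned with $\bmu$, and hence nearly aligned with the common keys, so I must make sure some query separates them. The plan is to rely on the small orthogonal component of the common keys, for example an $\epsilon\,\mathbf{v}$ perturbation as in the Cosine Failure construction, and to pick $q_j$ along $\mathbf{v}$ minus a multiple of $\bmu$. This keeps $q_j^\top \bk_i \le 0$ while giving a positive gap $c$ on the common keys; scaling $q_j$ then makes $c$ as large as required. A secondary issue is to state clearly that ``if'' is read as: each condition is sufficient to realize such a configuration. It does not mean the conditions force the conclusion for every configuration.
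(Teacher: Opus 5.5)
The paper gives no proof of this proposition. It is followed only by an interpretive remark, with the correlation table as its sole support, so there is nothing to compare against. Your reading (each listed condition is a sufficient mechanism, shown by explicit construction with $\tau,\gamma$ arbitrary) is a reasonable way to make it precise. Your angular sub-case, Case~2, and Case~3 with queries before position $i$ go through. The step that fails is the radial sub-case of Case~1, which is the mechanism the proposition actually names.

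\textbf{The radial sub-case.} Your sufficient condition is $q_j^\top\bk_c\ge c>0$ for every common key together with $q_j^\top\bk_i\le 0$. For a radial outlier this is infeasible, and no orthogonal perturbation of the common keys can rescue it. Suppose $\bk_i=R\hat{\bmu}$ with $\bmu$ the actual centroid. Then $\sum_c\bk_c=N\bmu-\bk_i=(N\|\bmu\|-R)\hat{\bmu}$. In your regime the centroid is dominated by the common keys, so $N\|\bmu\|-R>0$. Hence $\bk_i$ is a positive multiple of $\sum_c\bk_c$, and $q_j^\top\bk_i$ is a positive multiple of $\sum_c q_j^\top\bk_c>0$. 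Since linear functionals commute with averaging, a key that is radial with respect to the true mean always lies in the cone spanned by the common keys. Two consequences follow. Your sentence ``such a $q_j$ exists because $\bk_c$ is not a positive multiple of $\bk_i$'' is false here: with unperturbed common keys the radial direction is forced to be $(\mathbf{e}_1+\mathbf{e}_2)/\sqrt{2}$, so $\bk_c$ literally is a positive multiple of $\bk_i$. And the $\epsilon\mathbf{v}$ trick only appears to work in the paper's Cosine Failure construction because there $\bmu=\mathbf{e}_1$ is imposed by fiat rather than computed as the mean of the keys.

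\textbf{The repair.} Drop the sign conditions; your softmax bound needs only a uniform logit gap. If $q_j^\top\bk_c-q_j^\top\bk_i\ge c$ for all common keys, then $\alpha_{ji}=1/(1+\sum_c e^{(q_j^\top\bk_c-q_j^\top\bk_i)/\sqrt{d}})\le 1/(1+(N-1)e^{c/\sqrt{d}})$. Take $q_j=-\lambda\hat{\bmu}$. The gap is $\lambda(R-\hat{\bmu}^\top\bk_c)\ge\lambda(R-\max_c\|\bk_c\|)$. This is positive once $R$ exceeds the common-key norms, which you need anyway to beat a large $\tau$. Scaling $\lambda$ then makes the gap arbitrarily large, with no perturbation at all.

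Alternatively, use symmetric perturbations $\mathbf{w}\pm\epsilon\mathbf{v}$, with $\mathbf{v}$ a unit vector orthogonal to $\mathbf{w}$, and $q_j=\beta\mathbf{v}$. The centroid stays parallel to $\mathbf{w}$, $\bk_i$ gets logit $0$, and half the common keys beat it by $\beta\epsilon$. That still gives $\alpha_{ji}<2e^{-\beta\epsilon/\sqrt{d}}/(N-1)$.

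\textbf{Two smaller points.} In Case~2, the bound $N_q e^{2B/\sqrt{d}}/N$ is small only if the number of scoring queries stays fixed, as with the paper's $Q\in\R^{N_q\times d}$ or an observation window. If $\sum_j$ runs over all $N$ context positions, the scores sum to $N$ across tokens and average $1$ however diluted the attention is. So state explicitly that $N_q$ is held fixed as $N\to\infty$. In Case~3, only queries at positions before $i$ give a zero sum. An observation window placed after $i$ does not, so that parenthetical needs Case~1 or~2 on top.
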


\textbf{Conclusion:} ManifoldKV's geometric approach is \textbf{complementary} to attention-based methods, capturing a different but equally important notion of token importance.